%% file: main.tex
\documentclass{article} 
\usepackage{iclr2027/iclr2027_conference,times}
\usepackage{amsthm, amssymb}
\usepackage{algorithm}
\usepackage{algpseudocode}
\usepackage{mathtools} 
\usepackage{subcaption}
\usepackage{thmtools}
\usepackage{thm-restate}
\usepackage{multirow}
\input{iclr2027/math_commands}

\usepackage{booktabs}
\usepackage{hyperref}
\usepackage{url}
\usepackage{float}

\theoremstyle{definition}
\newtheorem{definition}{Definition}
\newtheorem{condition}{Condition}
 
\theoremstyle{plain}
\newtheorem{lemma}{Lemma}

\theoremstyle{remark}
\newtheorem{remark}{Remark}

\title{Adaptive Computation with Conformal  Certification}

\author{
    Shehtab Zaman$^\dagger$, Aneesh Ragnekar$^{\dagger}\thanks{Work done while at MSKCC.}$,  Nancy Lee$^{\ddagger}$, \& Harini Veeraraghavan$^\dagger$\\
    ${^\dagger}$Department of Medical Physics, $^{\ddagger}$ Department of Radiation Oncology\\
Memorial Sloan Kettering Cancer Center\\
New York, NY, USA \\
}

\newcommand{\cX}{\mathcal{X}}
\newcommand{\cY}{\mathcal{Y}}
\newcommand{\cG}{\mathcal{G}}
\newcommand{\OO}{\Omega}

\newcommand{\coursePart}{\Omega_{\text{coarse}}}
\newcommand{\finePart}{\Omega_{\text{fine}}}

\iclrfinalcopy 
\begin{document}

\maketitle
\begin{abstract}
Scientific imaging data involving medical (e.g., 3D radiographic, microscopic images) and astronomical sensors pose different challenges from natural images. 
One key challenge is that these datasets are often orders of magnitude larger than natural images. 
Efficient training and inference on such data is challenging due to the increased hardware requirements, often needing bespoke solutions. 
Moreover, the relevant objects such as tumors and astronomical phenomena are sparsely distributed with respect to the background information. 
Standard efficiency techniques like quantization and pruning apply uniform scaling of all regions, potentially compromising the precision/fidelity of predictions. 
Thus, efficient methods that reduce computational requirements without losing resolution for relevant parts of the data and with uncertainty estimates on predictions are essential.
We present \textit{Conformal Adaptation} with risk guarantees for uncertainty-aware sparse computations. Our formalism is general, spanning dense pixel-level classification and regression tasks where hierarchical models that predict at low resolution, certify the signal using conformal risk control, and refine the certified predictions. 
We benchmark our method on multiple tasks such as 3D medical image segmentation, 2D galaxy flux prediction, and 1D calcium spike rate prediction, achieving $2.5$--$72\times$ wall-clock speedup across the various tasks, including on commodity CPUs. 
\end{abstract}

\section{Introduction}

Deep learning models have expanded far beyond benchmark vision and natural language processing tasks, becoming foundational components of high-throughput scientific pipelines ranging from drug discovery and weather forecasting to medical imaging and astronomical surveys. However, deploying these models in real-world environments frequently exposes a fundamental tension between predictive capacity and resource constraints~\citep{jazbec_fast_2024}. While frontier models deliver state-of-the-art accuracy, their practical utility rapidly degrades when constrained by edge hardware, strict latency budgets, or extreme data scales. As computational access becomes a bottleneck across disciplines, designing models that are efficient, scalable, and trustworthy has become a central challenge.

This bottleneck also carries severe environmental and structural costs. Maximizing benchmark performance via scale demands unsustainable energy and carbon footprints~\citep{schwartz2020green, mehlin2023towards}. It also raises steep capital barriers, concentrating research within well-funded enterprise labs while sidelining academic and clinical researchers~\citep{guan2024energy}. Because hardware scaling yields diminishing returns, algorithmic efficiency is critical for responsible scientific computing.

Practitioners pursue efficiency along two broad routes. Hardware specialization and kernel-level tuning preserve computational fidelity but require long development cycles and capital-intensive infrastructure. 
Algorithmic optimizations such as post-training quantization~\citep{liu_llm-qat:_2024, chen_efficientqat:_2025}, structural pruning~\citep{pruning-1, pruning-2}, low-rank factorization~\citep{lora,qlora}, and distillation~\citep{knowledge} trade some precision for large runtime reductions on commodity hardware. 
On standard vision and NLP benchmarks this tradeoff is usually benign, and sub-8-bit representations are now common practice. 
These techniques are increasingly applied in safety-critical settings~\citep{slimUNETRV2, tokenseg, latup}, where they are typically validated by aggregate metrics but can still degrade performance~\citep{alcaraz2026cloud}. 
Many dense scientific tasks, however, are characterized by extreme spatial sparsity: tumors in a positron emission tomography (PET) volume or rare transients in a sky survey occupy a tiny fraction of the data. 
In this regime an aggregate metric can remain high while revealing little about the few regions that matter, and the cost of an error (a missed lesion, a false discovery) is concentrated there. 
Practitioners therefore benefit from acceleration methods whose risk on the sparse signal is explicitly bounded.



We approach efficiency along a complementary axis. Rather than reducing the cost of each evaluation, we reduce the number of locations that require expensive evaluation and certify that decision. We introduce \textit{Conformal Adaptation}, a framework for allocating compute across dense prediction tasks. A low-resolution pass identifies candidate regions of interest, and conformal risk control~\citep{CRC} calibrates this selection to provide finite-sample, distribution-free bounds on the expected fraction of signal missed, at a user-specified level. A hierarchy of sparse operators then refines only the selected regions during both training and inference, and we show that the calibrated bound is preserved through refinement under two stated conditions. As the coarse screening model is an ordinary network, \textit{Conformal Adaptation} composes naturally with quantization, pruning, and other model-level compression techniques.


\paragraph{Contributions}
 Our primary contributions are \textbf{(a)} an end-to-end framework combining conformalized candidate selection with sparse execution, delivering significant training and inference speedups backed by conformal risk control (CRC) guarantees, \textbf{(b)} a general formulation of admissible label aggregation operators that makes the framework applicable across tasks, and \textbf{(c)} case studies spanning multiple scientific domains, data modalities, and sparsity regimes that empirically validate the guarantees and quantify the compute savings. 

The manuscript is organized as follows. We first establish common notation for dense predictions, sparse refinement solutions, and CRC in Sec.~\ref{sec:background}. We then detail the \textit{Conformal Adaptation} algorithm in Sec.~\ref{sec:con-adapt} and define our performance metrics in Sec.~\ref{sec:metrics}. We showcase the applicability of our method across segmentation (Sec.~\ref{sec:seg}) and regression tasks (Sec.~\ref{sec:reg}), review related work in Sec.~\ref{sec:related-work}, and discuss limitations, trade-offs, and future work in Sec.~\ref{sec:limitations}. 

\section{Background}
\label{sec:background}

\paragraph{Notation. } We focus on dense prediction tasks whose domains admit
\emph{hierarchical} partitionings. Inputs $X$ and outputs $Y$ are spatially discretized, with $X\in\mathbb{R}^{|\OO|\times l}$ and $Y\in\mathbb{R}^{|\OO|\times k}$, where $\OO$ is a finite set of \emph{sites} (pixels in 2D, voxels in 3D, or more generally any indexed cell) and $l$ and $k$ are the numbers of input and output channels, respectively. We say a coarse partitioning $\coursePart$
\emph{coarsens} a fine partitioning $\finePart$, written $\coursePart\preceq\finePart$, when
every cell of $\coursePart$ is a disjoint union of cells of $\finePart$; equivalently,
the fine partitioning refines the coarse one, and each fine site lies in
exactly one coarse cell.

We seek to learn a non-linear function $\mathcal{F}_{\theta}: \mathcal{X} \rightarrow \mathcal{Y}$.
Due to hierarchical partitioning, we consider $\mathcal{F}_\theta$ that decomposes into a coarse model followed by a refinement operator, $\mathcal{F}_\theta(X) = \cG\bigl(X^{\finePart}, \mathcal{F}_{\coursePart}(X^{\coursePart})\bigr)$, so that predictions can be performed at multiple levels.
It is far cheaper to predict on $\coursePart$ and refine than to predict directly on $\finePart$ with a full-resolution model $\mathcal{F}_{\finePart}$.

\paragraph{Conformal Risk Control.}
We turn to CRC~\citep{CRC} as a distribution-free black-box method for providing prediction guarantees at coarse resolution.

Given a per-image loss $L(\lambda;X,Y)$ that, for every fixed $(X,Y)$, is non-increasing and right-continuous in $\lambda$, bounded in $[0,B]$, and satisfies $L(1;X,Y)\le\alpha$, together with calibration data
$(X_i,Y_i)_{i=1}^n$ exchangeable with the test point $(X_{n+1},Y_{n+1})$, the threshold

\begin{equation}
  \hat\lambda \;=\; \inf\Bigl\{\lambda :
  \tfrac{1}{n+1}\bigl(\textstyle\sum_{i=1}^n L(\lambda;X_i,Y_i) + B\bigr)
  \le \alpha \Bigr\}
  \label{eq:crc}
\end{equation}

guarantees,

\begin{equation}
    \label{eq:crc-eq}
    \E\bigl[L(\hat\lambda;X_{n+1},Y_{n+1})\bigr]\le\alpha
\end{equation}

The goal of \textit{Conformal Adaptation} is to construct a loss $L$ and hierarchical refinement operations, $\mathcal{G}$, satisfying the CRC requirements at partitioning $\coursePart$ while presenting the output at partitioning $\finePart$. Operating on the lower resolution $\coursePart$ reduces computation, and sparsifying $\coursePart$ by selecting candidate cells for refinement with CRC further increases efficiency without compromising fidelity.

\section{Conformal Adaptation}
\label{sec:con-adapt}
\textit{Conformal Adaptation} proposes a trade-off between statistical guarantees and spatial fidelity to speed up computation through sparsification. We are primarily concerned with loss of fidelity due to prediction at the coarser level and seek to close the gap in prediction uncertainty with respect to the finest resolution. Hence, our current focus does not extend the CRC framework to the finest resolution. 
We first establish an abstract formulation to highlight the generalizability of our method, then provide specializations to individual use cases including segmentation and regression.

\begin{figure}[h]
    \centering
\includegraphics[width=0.95\linewidth]{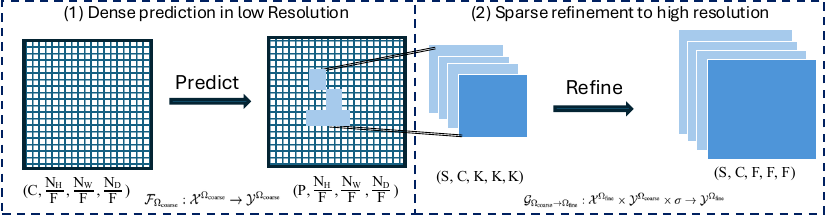}
    \caption{The two-stage process for \textit{Conformal Adaptation} consists of \textbf{(1)} dense predictions at a spatial scale reduced by a factor $r$ and \textbf{(2)} sparse refinement on the selected set $S$ of coarse cells. The selection is based on thresholds calibrated using CRC, and the operating regime we target is $|S|\ll|\coursePart|$.}
    \label{fig:overview}
\end{figure}

Exploiting the spatial coherence of adjacent units in an image-like spatial grid, we reformulate dense prediction as a coarse prediction using an aggregation operator $A$ followed by
\emph{selective} refinement using an operator $\cG$ on relevant regions needing additional computation:

\begin{align}
  \mathcal{F}_{\coursePart} &: \mathcal{X}^{\coursePart} \rightarrow \mathcal{Y}^{\coursePart},\\
  \cG_{\coursePart\rightarrow\finePart} &:
    \cX^{\finePart}\times\cY^{\coursePart}
    \times\{0,1\}^{\coursePart}\to\cY^{\finePart},
    \quad \coursePart\preceq\finePart 
  \label{eq:sparse-G}
\end{align} 

We define a binary selection function $\sigma:\coursePart\rightarrow\{0,1\}$ 
over the coarse partition $\coursePart = \{c_1, \dots, c_K\}$, where a coarse cell $c_k\in\coursePart$ is selected for refinement if and only if $\sigma(c_k)=1$. The resulting set of selected cells is $ S_\sigma=\{c_k\in\coursePart:\sigma(c_k)=1\}\subseteq\coursePart$. $\sigma$ is derived from the coarse prediction $\hat O^{\coursePart}=\mathcal{F}_{\coursePart}(X^{\coursePart})$. Our framework allows for dense pixel-level categorical (binary, multi-class) as well as continuous (regression) predictions. The refinement $\cG_{\coursePart\rightarrow\finePart}\bigl(X^{\finePart},\hat O^{\coursePart},\sigma\bigr)$ acts piecewise on the \emph{fine} sites of each \emph{coarse} cell:

\begin{equation}
    \label{eq:piecewise-G}
    \cG_{\coursePart\rightarrow\finePart}
    \big|_{c_k}
    =\mathcal{G}_{c_k}\left(X^{\finePart}\big|_{c_k},\hat O^{\coursePart}[c_k]\right)
    \qquad c_k \in S_\sigma
\end{equation}

Unselected cells of $\coursePart$ are filled by the null value or simple upsampling, to preserve the prediction structure at fine level. The general inference algorithm is visualized in Fig.~\ref{fig:overview}. The cost for the refinement scales with $|S_\sigma|$ rather than $|\finePart|$, so a smaller selection is cheaper. The task is thus to make $S_\sigma$ as small as possible without discarding signal that matters, a trade-off we control with a statistical guarantee as described in the following paragraphs. The detailed training, calibration, and inference algorithms are provided in Appendix~\ref{appendix:algo}. 
 
\paragraph{Conformal cell selection from the coarse prediction.}
The selection set must be a function of information available at test time, i.e.\ of
the input through the coarse prediction.
We obtain it from a coarse \emph{score field}
$s:\mathcal{X}^{\coursePart}\to[0,1]^{\coursePart}$, read out from $\hat O^{\coursePart}$,
thresholded at a level $\lambda\in[0,1]$:
\begin{align}
    \label{eq:mask}
    \sigma_\lambda(X)[c_k]
  &=\mathbb{I}\left[ s(X^{\coursePart})[c_k]\ge 1-\lambda \right]\\
  S_\lambda(X)&=\{c_k:\sigma_\lambda(X)[c_k]=1\}
\end{align}

Note that the selection is monotone in $\lambda$, with $\lambda_1\le\lambda_2\Rightarrow
S_{\lambda_1}(X)\subseteq S_{\lambda_2}(X)$, and $\lambda=1$ selects every
cell. Crucially, the selection flag, $\sigma_\lambda$, depends on $X$ alone, through the coarse prediction.
 
\paragraph{Cell aggregation from fine labels.}
Since predictions must happen in lower resolution, we must define an \emph{aggregation operator} $A$ that maps the fine labels inside a coarse cell to a \emph{cell weight}. The cell weight scores the `amount' of signal in each coarse cell by a single non-negative scalar. 

\begin{equation}
    w_{c_k}(Y)\;=\;A\!\left(Y^{\finePart}\big|_{c_k}\right)\;\ge\;0,
    \label{eq:weight}
\end{equation}

where $Y^{\finePart}|_{c_k}$ denotes the restriction of the fine-resolution label to the fine sites contained in the coarse cell $c_k$. Because $\coursePart\preceq\finePart$, each coarse cell is a disjoint union of fine sites. Larger weights represent more signal within the coarse cell.


The fine sites of a coarse cell $c\in\coursePart$ are the set
$\finePart|_{c}=\{u\in\finePart: u\subseteq c\}$. Let $\varnothing\in\cY$ denote the null value, corresponding to background labels in segmentation or zero signal in regression.
We require $A$ to satisfy two conditions: 

\begin{condition}[Nullity]\label{cond:c1}
For every $c\in\coursePart$ and every $y\in\cY^{\finePart|_{c}}$, $y\equiv\varnothing \;\Longrightarrow\; A(y)=0$.
\end{condition}

\begin{condition}[Occupancy-preservation]\label{cond:c2}
For every $c\in\coursePart$ and every $y\in\cY^{\finePart|_{c}}$,
$\exists\, u\in\finePart|_{c}$ with $y[u]\neq\varnothing\;\Longrightarrow\; A(y)>0$.
\end{condition}
\paragraph{Connecting to CRC.}
We can now define a loss meeting the requirements of Eq.~\ref{eq:crc}. Let
the loss $L(\lambda; X, Y)$ be the fraction of total signal mass that lies in \emph{unselected}
cells. 
\begin{minipage}{0.53\textwidth}
\begin{equation}
  L(\lambda;X,Y)\;=\;
  \frac{\displaystyle\sum_{c_k\notin S_\lambda(X)} w_{c_k}(Y)}
       {\displaystyle\sum_{c_k\in\coursePart} w_{c_k}(Y)}
  \label{eq:ca-loss}
\end{equation}
\end{minipage}
\begin{minipage}{0.43\textwidth}
\begin{equation}
  \E\!\left[
  \frac{\displaystyle\sum_{c_k\notin S_{\hat\lambda}(X)} w_{c_k}(Y)}
       {\displaystyle\sum_{c_k\in\coursePart} w_{c_k}(Y)}
  \right]\;\le\;\alpha .
  \label{eq:ca-guarantee}
\end{equation}
\end{minipage}

Note that $L$ is bounded in $[0,1]$, as $w \geq 0$, and $0/0:=0$ for examples with zero total ground-truth signal weight (or empty predictions). 
We require $L$ to be non-increasing and right-continuous in $\lambda$. 
Increasing $\lambda$ enlarges $S_\lambda(X)$ by Eq.~\ref{eq:mask}, which removes non-negative terms from the 
numerator.

At $\lambda=1$ every cell is selected, so the numerator is empty 
and $L(1;X,Y)=0\le\alpha$. All conditions for applying CRC are therefore satisfied. 
Substituting Eq.~\ref{eq:ca-loss} into the calibration of Eq.~\ref{eq:crc} yields 
a threshold $\hat\lambda$, which gives us the guarantee in Eq.~\ref{eq:crc-eq} 
reformulated for signal loss in Eq.~\ref{eq:ca-guarantee}.
Under exchangeability with a calibration set, we ensure that the unselected cells contain, in expectation, at most an $\alpha$ fraction of the true signal mass. 
The CRC guarantee is established entirely at $\coursePart$ as it depends on $X$ and $Y$ through the cell aggregates $w_{c_k}(Y)$ and $S_{\hat\lambda}(X)$. We decouple the presentation of the solution from the certification of the solution.

\subsection{Theoretical Guarantees}
\label{sec:guarantees}

We call an aggregation operator $A$ \emph{admissible} if it is fixed and satisfies non-negativity (Eq.~\ref{eq:weight}) and Conditions~\ref{cond:c1} and~\ref{cond:c2}. We write
$\mathcal{O}(Y)=\{c\in\coursePart:\exists\,u\in\finePart|_{c},\;Y[u]\neq\varnothing\}$
for the set of coarse cells \emph{occupied} by the fine label $Y$. These are the coarse cells that have positive signal mass.
$\hat Y^{\finePart}_{\sigma}=\cG_{\coursePart\to\finePart}(X^{\finePart},\hat O^{\coursePart},\sigma)$ is the output of the sparse pipeline under selection $\sigma$, with $\sigma\equiv\mathbf{1}$ recovering dense refinement.

The aforementioned definition of admissible aggregation operators provides sufficient conditions to show that (i) the solution sets are nested in coarse resolution and monotone in $\lambda$, (ii) the missed signal is proportional to the weight and bounded, and (iii) we obtain a marginal guarantee over discarded signal mass. Formally, we have:
\begin{restatable}{theorem}{mytheorem}\label{thm:cert-sparse}
Let $A$ be admissible, let $(X_1,Y_1),\dots,(X_n,Y_n),(X,Y)$ be
exchangeable, where $(X,Y)=(X_{n+1},Y_{n+1})$ is the test pair, and let $\hat\lambda$ be the CRC threshold obtained by
calibrating Eq.~\ref{eq:ca-loss} at level $\alpha$ on the first $n$ pairs. Write
$D=\coursePart\setminus S_{\hat\lambda}(X)$ for the discarded cells. Then:
\begin{enumerate}
\item[(i)] Almost surely,
  $\hat Y^{\finePart}_{\sigma_{\hat\lambda}(X)}$ and $\hat Y^{\finePart}_{\mathbf{1}}$ agree at
  every fine site of every cell in $S_{\hat\lambda}(X)$; the two pipelines can
  differ only on $\bigcup_{c\in D}\finePart|_{c}$.
\item[(ii)] The true signal mass carried by that
  region is controlled:
  \[
    \E\!\left[
      \frac{\sum_{c\in D}w_{c}(Y)}{\sum_{c\in\coursePart}w_{c}(Y)}
    \right]\;\le\;\alpha .
  \]
\item[(iii)] With probability at least
  $1-\alpha/t$ for any $t\in(\alpha,1]$, the discarded region carries at most a
  $t$-fraction of the signal mass; and the event that it carries none is
  exactly $\{\mathcal{O}(Y)\subseteq S_{\hat\lambda}(X)\}$, for every admissible $A$.
\end{enumerate}
\end{restatable}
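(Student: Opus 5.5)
We prove the three parts separately, each by a direct appeal to the structure already set up in Sec.~\ref{sec:con-adapt}.

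\textbf{Part (i)} is structural and uses only the piecewise form of the refinement operator in Eq.~\ref{eq:piecewise-G}. Fix any realization of the data. The sparse output $\hat Y^{\finePart}_{\sigma_{\hat\lambda}(X)}$ and the dense output $\hat Y^{\finePart}_{\mathbf{1}}$ share the same coarse prediction $\hat O^{\coursePart}=\mathcal{F}_{\coursePart}(X^{\coursePart})$, since the selection enters only after the coarse pass. On each cell $c\in S_{\hat\lambda}(X)$, both pipelines therefore compute $\mathcal{G}_{c}(X^{\finePart}|_{c},\hat O^{\coursePart}[c])$, a function of the cell's inputs only. Hence they coincide on $\finePart|_c$. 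Because $\coursePart\preceq\finePart$, the fine sites are partitioned by the sets $\finePart|_c$, so any disagreement must lie in $\bigcup_{c\in D}\finePart|_c$. The claim holds pointwise, hence almost surely.

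The one assumption to flag is that $\mathcal{G}_c$ does not read neighbouring cells or the selection mask itself, so that the piecewise form is genuinely per-cell. If a halo or context window were used, we would need to require that the context comes only from $X^{\finePart}$ and $\hat O^{\coursePart}$, which are unaffected by $\sigma$. I expect this point of making the piecewise-locality assumption explicit to be the only real subtlety in the theorem.

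\textbf{Part (ii)} is exactly the CRC guarantee Eq.~\ref{eq:crc-eq} applied to the loss Eq.~\ref{eq:ca-loss}. We verify its hypotheses as in the ``Connecting to CRC'' paragraph:
\begin{itemize}
\item $L\in[0,1]$, so we may take $B=1$, using $w\ge 0$ and the convention $0/0:=0$;
\item $L(1;X,Y)=0\le\alpha$, since $\lambda=1$ selects every cell;
\item $L$ is non-increasing in $\lambda$, by the nested selections $S_{\lambda_1}\subseteq S_{\lambda_2}$;
\item $L$ is right-continuous in $\lambda$: $S_\lambda=\{c: s[c]\ge 1-\lambda\}$ is a finite union of closed-threshold indicators, each right-continuous in $\lambda$, so $L$ is a right-continuous step function;
\item the fixed operator $A$ makes $L$ a fixed function of $(X,Y)$, preserving exchangeability of the losses.
\end{itemize}
The numerator of $L(\hat\lambda;X,Y)$ is exactly $\sum_{c\in D}w_c(Y)$, which gives the bound.

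\textbf{Part (iii)} has two pieces. For the first, apply Markov's inequality to the nonnegative variable $L(\hat\lambda;X,Y)$, using part (ii):
\[
\Pr\bigl[L(\hat\lambda;X,Y)>t\bigr]\le \E\bigl[L(\hat\lambda;X,Y)\bigr]/t\le\alpha/t .
\]
The range $t\in(\alpha,1]$ is precisely where this bound is nontrivial.

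For the second piece, we show that $\{L(\hat\lambda;X,Y)=0\}$ equals $\{\mathcal{O}(Y)\subseteq S_{\hat\lambda}(X)\}$, where the ``carries none'' event means the numerator vanishes.
\begin{itemize}
\item Since all $w_c\ge 0$, the numerator vanishes iff $w_c(Y)=0$ for every $c\in D$.
\item By Condition~\ref{cond:c2}, $w_c(Y)>0$ for every $c\in\mathcal{O}(Y)$.
\item By Condition~\ref{cond:c1}, $w_c(Y)=0$ for every $c\notin\mathcal{O}(Y)$.
\end{itemize}
Together the two conditions give $\{c:w_c(Y)>0\}=\mathcal{O}(Y)$. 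The numerator is therefore zero iff $D\cap\mathcal{O}(Y)=\varnothing$, i.e.\ iff $\mathcal{O}(Y)\subseteq S_{\hat\lambda}(X)$. This characterization is independent of the particular admissible $A$.

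When $Y\equiv\varnothing$, both events hold trivially, consistent with the $0/0$ convention.
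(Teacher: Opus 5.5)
Your proposal is correct and follows essentially the same route as the paper, which splits your three steps into separate lemmas before invoking CRC and Markov's inequality: verification of the CRC hypotheses, cellwise locality of $\mathcal{G}$ giving agreement on selected cells, and the equivalence $w_c(Y)>0 \iff c\in\mathcal{O}(Y)$ from the Nullity and Occupancy-preservation conditions. The only nuance the paper adds is that the \emph{form} of the zero-loss event is the same for every admissible $A$, but the event itself can still vary with $A$ through $\hat\lambda$, since $\hat\lambda$ is calibrated on $A$-dependent weights.
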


\begin{proof}[Proof sketch]
    Informally, (i) is a consequence of selectivity of the refinement operation. Coarsening a sparse refinement is the same as masking the coarsening of a dense prediction. The statement (ii) is the result of the admissible operator being compatible with CRC. Finally, (iii) follows from Markov's inequality applied to the non-negative loss. 
    The detailed proof can be found in Appendix~\ref{appendix:math}.
\end{proof}

\section{Performance Metrics}
\label{sec:metrics}
We focus on both dense classification and regression tasks from varying domains. Segmentation tasks use Dice similarity coefficient (DSC) and lesion detection metrics as primary accuracy  targets. The spike count prediction and galaxy super-resolution use mean absolute error (MAE). For performance metrics, we audit both wall-clock speedup as well as FLOPs. We use both CPUs and GPUs to obtain timing results. 

\section{Case Study: Conformal Adaptation for Segmentation}
\label{sec:seg}



\begin{definition}[Segmentation Cell Aggregation] We use fine-grained cell occupation to define our aggregation operator. Each coarse cell $c\in\coursePart$ contains the set of fine sites $\finePart|_c$. We define the aggregation operator for segmentation maps, transporting masks from resolution $\finePart$ to $\coursePart$ as,
\[
w_c(Y^{\finePart})
=
\bigvee_{u\in\finePart|_c} Y^{\finePart}[u],
\qquad c\in\coursePart.
\]
\end{definition}
 
In the case of tumor segmentation, the logical OR operator ensures positive tumor cells are not missed during coarsening. By construction, we can see that the OR-aggregation operator is zero-preserving, as it is 0 when all inputs are 0, and monotonically increasing when any cell is nonzero. We take any off-the-shelf segmentation prediction model providing softmax-like scores per cell, and we can define the selection set $S_\lambda(X)$.

\subsection{Brain Tumor Segmentation}

The brain tumor dataset from the Medical Segmentation Decathlon (MSD)~\citep{MedDecathlon} consists of 484 cases with brain tumors of varying sizes, occupying at most 3.5\% of the image volume. Each case is a 4D volume combining pre- and post-contrast T1-weighted, T2-weighted, and fluid attenuation inverse recovery (FLAIR) MRI scans. 

We apply \textit{Conformal Adaptation} to three segmentation architectures: U-Net~\citep{unet}, Swin-UNETR~\citep{swin-unet, swinunetr-v2}, and VISTA3D~\citep{vista3d}. We train each randomly initialized architecture three times at full resolution (\textit{Full}) and at \(4\times\)-downsampled resolution (\textit{Coarse}). The coarse model then determines the sparse patches used to train the second-stage refiner (\textit{Coarse + Refine}). The data processing, architecture hyperparameters, and training methods are detailed in Appendix~\ref{appendix:decathlon}. Across architectures and devices, the method provides substantial FLOPs and wall-clock reductions while retaining similar DSC (Fig.~\ref{fig:seg-accuracy-speed}). In CPU-only inference, VISTA3D achieves up to $72\times$ wall-clock speedup and an $82.1\times$ reduction in FLOPs (Table~\ref{tab:model_comparison}). We see super-linear improvement as the data can use fewer forward passes with sliding window inference in coarse resolution.

\begin{table}[t] \centering \caption{Performance comparison on MSD Task01. Values are reported as mean $\pm$ standard deviation over three seeds. Columns 2-4 are the DSC for each model arrangement. Capacity-constrained models like U-Net see an improvement in performance due to additional weights. Larger scans require sliding window inference, increasing the FLOPs required for a single sample. } \label{tab:model_comparison} 
\resizebox{\textwidth}{!}{
\begin{tabular}{lccccc} \toprule 
Model & Full & Coarse & Coarse + Refine ($\alpha=0.1$) & Cell Miss Rate & FLOPs Speedup \\ \midrule 
U-Net & $0.63 \pm 0.03$ & $0.51 \pm 0.00$ & $0.68 \pm 0.01$ & $0.10 \pm 0.01$  & 2.70$\times$ \\
Swin-UNETR & $0.74 \pm 0.01$ & $0.61 \pm 0.01$ & $0.70 \pm 0.01$ & $0.09 \pm 0.00$  & 54.03$\times$ \\
VISTA3D & $0.74 \pm 0.00$ & $0.62 \pm 0.01$ & $0.71 \pm 0.00$ & $0.08 \pm 0.01$  & 82.10$\times$ \\ \bottomrule
\end{tabular}}
\end{table}

\begin{figure}[h]
    \centering    \includegraphics[width=0.9\linewidth]{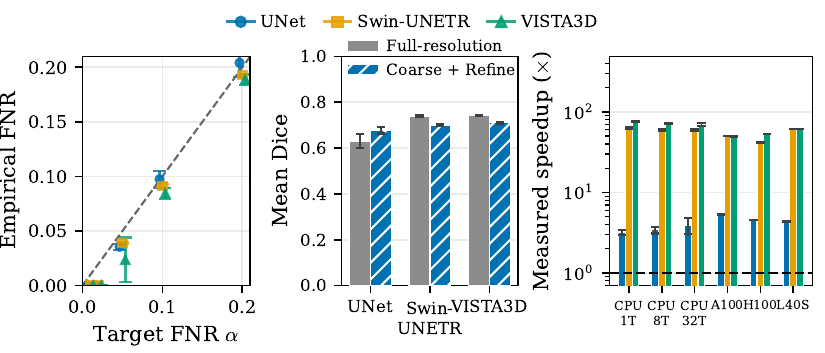}
    \caption{\textbf{Left:} Empirical verification of the CRC calibration quality. \textbf{Middle:} mean DSC of the Full baseline against the Coarse + Refine model at $\alpha=0.1$. \textbf{Right:}  Corresponding measured wall-clock speedup over the Full baseline on different devices.}
    \label{fig:seg-accuracy-speed}
\end{figure}
\vspace{-1em}
\begin{figure}[h]
    \centering    \includegraphics[width=0.6\linewidth]{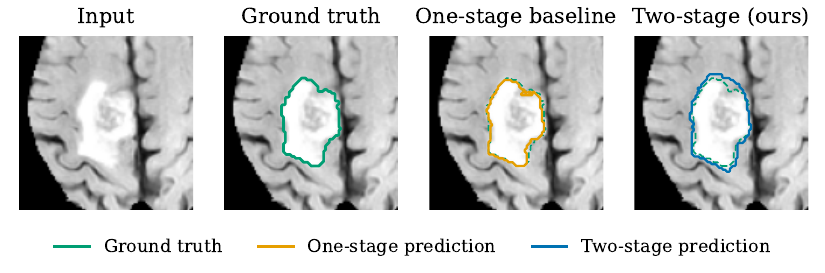}
    \caption{Predicted tumor boundaries on a held-out test scan (Swin-UNETR, $\alpha=0.1$). The Coarse + Refine model tracks the boundary as closely
    as the Full baseline at a fraction of the FLOPs (Table~\ref{tab:model_comparison}).}
    \label{fig:seg-contour-grid}
\end{figure}

\subsection{Head and Neck Cancer and Metastatic Lymph Node Segmentation}
Accurate segmentation of primary head and neck cancer (HNC) and affected lymph node volumes is essential for radiotherapy planning (RTP) as well as for monitoring tumor response to treatments. Computed tomography (CT) and 2-[18F]fluoro-2-deoxyglucose (FDG) PET are imaging modalities commonly used to localize and delineate tumors for RTP.
We base our study on the multimodal HECKTOR 2026 grand challenge dataset~\citep{HECKTOR26}, which provides 782 FDG-PET and CT volumes spanning from the head to the pelvis. The size, structure, and sparsity of the HNC scans and tumors make the dataset an ideal candidate for \textit{Conformal Adaptation}, as seen in Fig.~\ref{fig:hn-data-characteristics}. Each tumor spans significantly less than the total scanned area at the standard \(1\times1\times1\,\mathrm{mm}^3\) spacing. 
We train and evaluate off-the-shelf models, including nnU-Net~\citep{nnunet}, an industry standard that establishes a strong baseline, and the transformer-CNN-based Swin-UNETR~\citep{swinunetr-v2}. We use the standard CT and PET data pre-processing techniques, such as normalization to \([0,1]\) and isotropic resampling. During training, we achieve a large speedup, as training the coarse model along with a sparse refiner takes $15\times$ fewer FLOPs than the full-resolution model (see Fig.~\ref{fig:hn-data-characteristics}).  Full data-processing and training details are presented in Appendix~\ref{appendix:hn}.

\begin{figure}[t]
    \centering
    \begin{subfigure}[b]{0.31\textwidth}       \includegraphics[width=\textwidth]{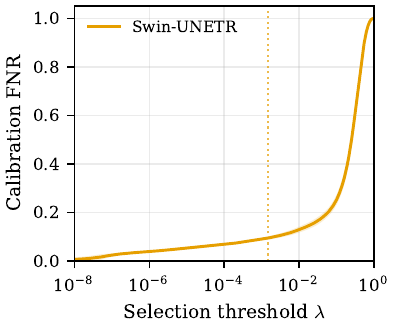}
    \end{subfigure}
    \begin{subfigure}[b]{0.31\textwidth}       \includegraphics[width=\textwidth]{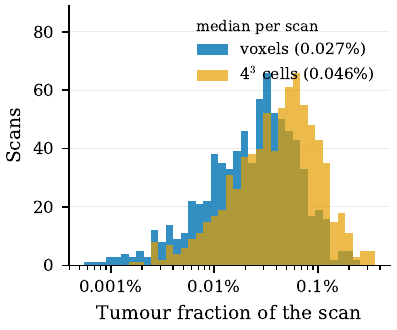}
    \end{subfigure}
    \begin{subfigure}[b]{0.31\textwidth}
\includegraphics[width=\textwidth]{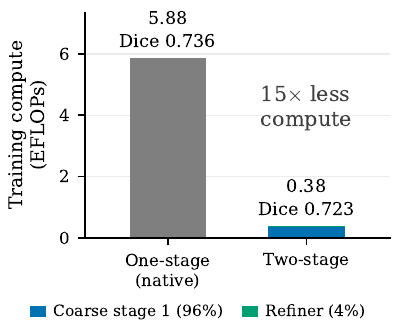}
    \end{subfigure}
    \caption{\textbf{(Left)}: False negative rates (FNR) can rapidly increase depending on chosen threshold, requiring a principled methods like CRC to choose appropriate  $\lambda$ (See Appendix~\ref{appendix:threshold}.\textbf{(Middle)} Tumor cells span a small fraction of the total scanned volume in most scans in the dataset. \textbf{(Right)} Training the coarse model on the aggregated samples and a separate refiner to convergence requires $15\times$ fewer FLOPs. All models are trained until validation loss stops decreasing with a set patience level.}
    \label{fig:hn-data-characteristics}
\end{figure}

Once trained and calibrated, our Coarse + Refine model achieves results comparable to the Full model (DSC 0.72 vs.\ 0.74) while requiring $68\times$ fewer FLOPs (Fig.~\ref{fig:hn-results}). Test-time augmentation with mirroring is used on all dense models. Furthermore, for lesion-detection metrics, where the predicted volume is used to detect lesions, our method achieves comparable recall and precision metrics but suffers from missing small adversarial GTVn cases (see Table~\ref{tab:hn-lesion-results}). 

\begin{figure}[h]
    \centering
    \begin{subfigure}[b]{0.66\textwidth}       \includegraphics[width=\textwidth]{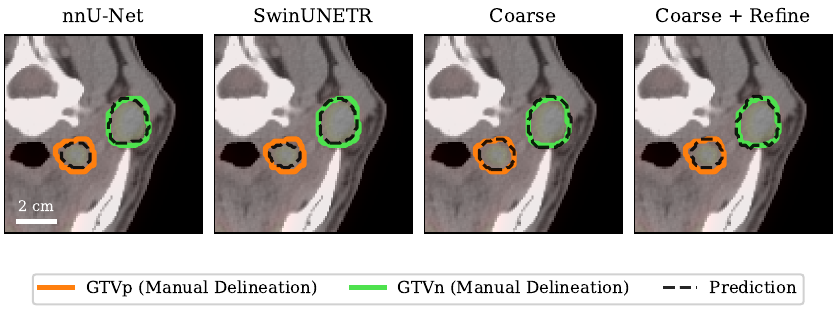}
    \end{subfigure}
    \begin{subfigure}[b]{0.32\textwidth}
\includegraphics[width=\textwidth]{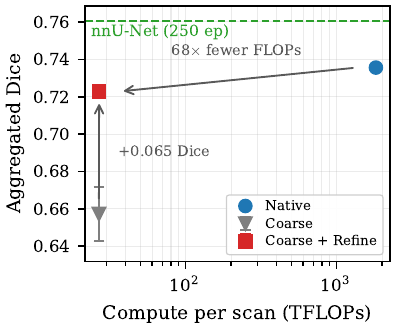}
    \end{subfigure}
    \caption{(Left) Visualizations of model segmentation with ground truth showing Coarse + Refine models with comparable contours. (Right) A coarse inference reduces per-scan compute cost while a sparse refinement improves prediction quality with little additional expense. }
    \label{fig:hn-results}
\end{figure}
\section{Case Study: Conformal Adaptation for Regression}
\label{sec:reg}

We evaluate the generality of the cell weight mechanism defined in Eq.~\ref{eq:weight} for regression-type problems, where the prediction target is a continuous, non-negative signal (a rate, a count, a flux) rather than a discrete class label.

\begin{definition}[Regression Cell Aggregation]
For a continuous, non-negative fine-resolution target $Y\in\mathcal{Y}^{\finePart}$, we define the aggregation operator as the total signal mass inside the coarse cell,
$
    w_c(Y^{\finePart}) = \sum_{u\in\finePart|_c} Y^{\finePart}[u], \quad c\in\coursePart .
$
\end{definition}
Summation is zero-preserving  and, since $Y\ge 0$, monotonically non-decreasing in any added signal, satisfying the requirements of Eq.~\ref{eq:weight}. The CRC loss of Eq.~\ref{eq:ca-loss} then bounds, in expectation, the \emph{fraction of total signal mass} that falls in unselected regions. Regression scores (e.g.,\ a predicted upper quantile of cell mass) are not confined to $[0,1]$, so we apply Eq.~\ref{eq:mask} after a fixed, strictly increasing map into $[0,1]$. The map preserves score order, so the selected sets stay nested in $\lambda$ as CRC requires, and we report $\hat\lambda$ by its equivalent raw-score cutoff. 

\subsection{CASCADE: 1D Calcium-Imaging Spike-Rate Inference}
\label{sec:cascade-case-study}

We use the CASCADE benchmark~\citep{rupprecht2021database} with 515 samples across 117 sources,
in which the task is to convert fluorescence traces from calcium imaging into spike-rate curves. 
The coarse model $\mathcal{F}_{\coursePart}$ block-averages the fluorescence trace by a factor of eight before applying the 1D U-Net of the full-resolution baseline $\mathcal{F}_{\finePart}$. Note, we do not use the benchmark's decoder-free sliding-window regressor. 
CRC is calibrated on the coarse block counts, and the certified selection is refined with a count-predicting local refiner combined with a mass-preserving Gaussian reconstruction kernel, so bounding missed count mass at the coarse resolution also bounds missed mass in the reconstructed rate curve. An example input and output prediction is provided in Fig.~\ref{fig:cascade-predictions}.
Table~\ref{tab:cascade-headline} reports the reference operating point. 
Across $\alpha\in\{0.01,0.05,0.1,0.2\}$, CRC calibration is empirically valid (test risk $\le\alpha$ at every level), and the reconstructed-curve MAE stays 
within $0.05$--$0.10$\,Hz of the full-resolution baseline (median MAE $0.250$\,Hz) while the retained fraction of coarse cells falls to $8.4\%$ at $\alpha{=}0.1$, an analytic FLOPs speedup of $4.77\times$. All details of the data pre-processing, model architecture, and calibration can be found in Appendix~\ref{appendix:CASCADE}.

\begin{figure}[t]
    \centering
\includegraphics[width=0.85\linewidth]{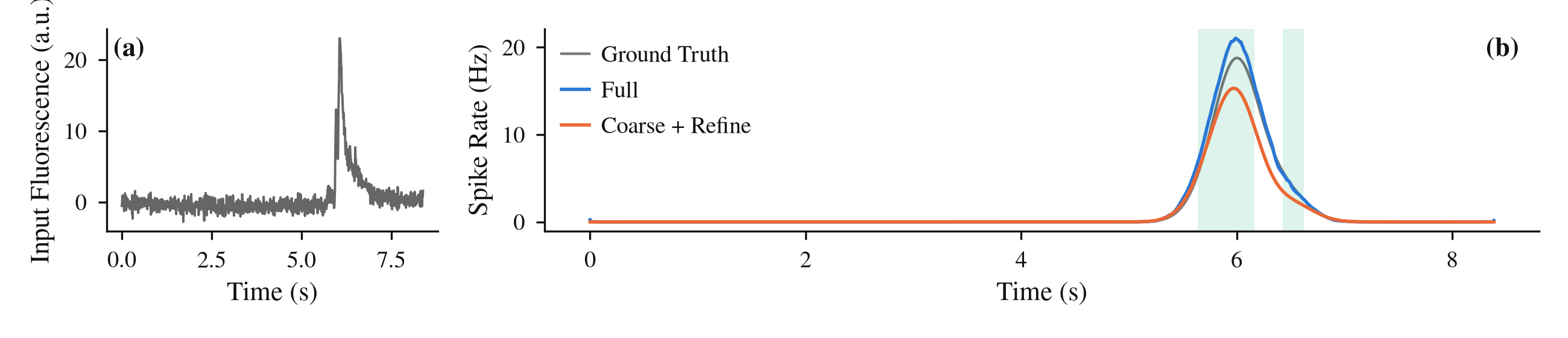}
    \caption{(a) Input signal. (b) Example test window: ground-truth spike rate overlaid with the Full prediction and the Coarse + Refine reconstruction, at a retained fraction of $8.4\%$.}
    \label{fig:cascade-predictions}
\end{figure}

\subsection{Galaxy Flux Field Super-Resolution}
\label{sec:galsim-case-study}
Extending our experiments further, we also study a 2D galaxy-flux field task of predicting galaxy shape and flux from simulated surface-brightness maps. Simulated surface-brightness fields are foundational to observational astronomy. As
the raw data provide no labeled truth, surveys characterize what their pipelines detect and miss by injecting sources of known flux into images and measuring recovery
\citep{anbajagane2025balrog, vera2026dp1}. The resulting completeness and photometric bias curves are used as inputs for correction terms on downstream detectors. Fields are generated with \textsc{GalSim} \citep{rowe2015galsim}, the community-standard image simulator validated against the precision requirements of Stage~IV surveys. In practice surveys such as Rubin/LSST process on the order of $20$\,TB per night under a latency budget of seconds per visit \citep{ivezic2019lsst} over fields that are overwhelmingly empty sky, making this setting well suited to our method.

We use a 2D U-Net as our base model. The Coarse + Refine pipeline is a coarse model sharing the base model's architecture but applied to an $8\times$-downsampled field, CRC certifying coarse cells by flux mass, and a sparse refiner predicting a non-negative spatial spread within each certified cell. The normalized refiner output is multiplied by the coarse net's own predicted median flux, making flux conservation between coarse and refined predictions exact by construction. 

At the calibrated operating point ($\alpha{=}0.1$, score cutoff $7.69$), 
the pipeline retains only $2.2\%$ of coarse cells on average, for an analytic FLOPs speedup of $61.6\times$ over the full-resolution baseline, at the cost of a higher per-pixel MAE ($0.134$ vs.\ $0.046$ for the full-resolution baseline) with total-flux relative error of $5.6\%$ vs.\ $2.3\%$, consistent with the certification. 
Figure~\ref{fig:galsim-main} presents example predictions and measured wall-clock speedups across several devices, illustrating the method's potential for edge deployment.

\begin{figure}[h]
    \centering
    \includegraphics[width=\linewidth]{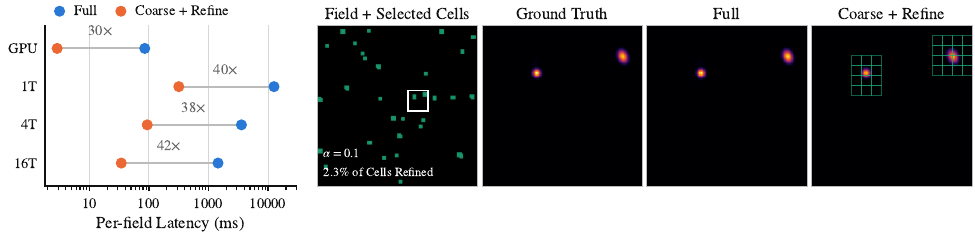}
    \caption{(Left) Achieved real-world wall-clock speedups on GPUs and CPU thread counts. (Right) Sample selection and comparison between the Full and Coarse + Refine models. The CRC-certified Coarse + Refine predictions have little discrepancy while costing a fraction of the compute and time. }
    \label{fig:galsim-main}
\end{figure}

\section{Related Work}
\label{sec:related-work}

Static compression reduces inference cost by modifying the \emph{model} independently of its input. Common approaches include pruning~\citep{yu_compressing_2017, osawa_accelerating_2017}, architectural simplification~\citep{slimUNETR, slimUNETRV2}, static activation sparsification~\citep{alanova_motivating_2026}, and quantization~\citep{gupta_deep_2015, sun_ultra-low_2020, liu_llm-qat:_2024, chen_efficientqat:_2025}. \citet{tokenseg} propose token sparsification for efficient 3D segmentation, while static resolution scaling accelerates training by operating at fixed lower resolutions~\citep{fastdinov2}. Unlike our method, these approaches evaluate error post hoc and provide no accuracy guarantees. Their gains are largely orthogonal to our input-adaptive sparsification.

Input-adaptive methods allocate compute based on the sample. Examples include token sparsification in vision transformers~\citep{rao_dynamicvit:_2021, fayyaz_adaptive_2022, uddin_heart-vit:_2025}, region-wise token granularity selection~\citep{havtorn2023msvit}, early exiting~\citep{he_magic_2021}, and adaptive pixel refinement in PointRend~\citep{pointrend}. While these methods focus computation on informative regions, they rely on learned or heuristic selection mechanisms without guarantees on discarded information, making them susceptible to failure modes and adversarial attacks~\citep{goyal_survey_2026}. In contrast, our calibrated selection rule provably bounds discarded signal mass at risk level $\alpha$.
Conformal prediction has been extended to structured outputs through unified task formulations~\citep{csp}, segmentation-specific spatial modeling~\citep{consign, CRC}, and input-conditional coverage~\citep{luo2025conditional}. These methods treat the predictor as fixed and use conformal prediction only to construct uncertainty sets, while we use the conformal certificate to guide computation.  \citet{wang2026conformal}, \citet{schuster2022confident}, \citet{schuster_consistent_2021}, and \citet{jazbec_fast_2024} apply distribution-free guarantees for adaptive inference and calibrate early-exit strategies to match full-model performance under compute constraints.

\section{Discussion, Limitations, and Conclusion}
\label{sec:limitations}
Although we evaluate on uniform grids, the derivation in Sec.~\ref{sec:con-adapt} is domain-agnostic and extends to non-uniform domains (e.g., graphs) whenever the refinement operators preserve the hierarchical partition. 
We use identical backbones for the native and coarse passes to isolate the gains due to conformalized selection itself. 
\textit{Conformal Adaptation} decides \emph{where} to compute, whereas quantization, pruning, sparse architectures, and token selection reduce the cost of \emph{each} computation, so the two are orthogonal. 
Any stage may use quantized or pruned backbones~\citep{liu_llm-qat:_2024, yu_compressing_2017}, sparse architectures~\citep{slimUNETR, slimUNETRV2, tokenseg}, or token selection~\citep{rao_dynamicvit:_2021, uddin_heart-vit:_2025} without changing the calibration procedure or the guarantee; whether their savings compound with ours is promising future work.
We acknowledge that the guarantee has three limitations. First, it bounds the expected fraction of true signal in discarded cells, not the error of the refined prediction. Second, it holds marginally, so an individual scan may exceed $\alpha$, and small lesions that are faint at coarse resolution may be missed. Third, it assumes exchangeability between calibration and test data and is therefore sensitive to covariate shift, e.g., across scanners or sites. Weighted CRC~\cite{tibshirani2019conformal}, high-probability (PAC-style) bounds~\citep{ltt}, and per-lesion risk control are natural extensions. \textit{Conformal Adaptation} offers a simple, calibrated way to spend compute only where it is required.

\newpage
\subsection*{Aknowledgements}
Research reported was supported by NCI of the National Institutes of Health under grant number R01CA258821, P30CA008748, and R01EB032825. 
The content is solely the responsibility of the authors and does not necessarily represent the official views of the National Institutes of Health.
This work utilized resources from the High Performance Computing Group at Memorial Sloan Kettering Cancer Center.

\subsection*{AI use statement}

In this work, we used generative AI tools to help refine and audit mathematical claims, assist in writing proofs, and assist in writing code to implement methods.
We have not used generative AI tools for generating synthetic datasets, help develop theoretical models or conceptual frameworks, propose or refine hypotheses, design or provide feedback on research methodology and experiments, assist with translation, clean and reformat datasets, support qualitative and thematic data analysis, and interpret results. 

Additionally, we used generative AI tools to edit code, improve readability, check grammar, and help identify relevant literature. 
We have reviewed all AI-assisted work. Author 1 has reviewed the implementation assisted with AI as well as the mathematical claims that were refined using AI. Authors 2 and 3 have further verified the notation and edits that were aided by generative AI, including checking mathematical claims, references, and manuscript edits. 
We take responsibility for the final content of this work,
including text, claims or artifacts produced with the aid of generative AI.

\subsection*{Reproducibility statement}

We take multiple steps to ensure all our results and analysis are reproducible. 
Our analysis uses publicly available data that can be freely downloaded from the cited sources. Although open to the public, the HECKTOR 2026 dataset requires signing a data use agreement and cannot be shared under its license terms. We detail the required data pre-processing in Appendices~\ref{appendix:decathlon}, \ref{appendix:hn}, \ref{appendix:CASCADE}, and~\ref{appendix:galsim}. The model definitions, training algorithms, and post-processing are also detailed in these appendices. Our main algorithmic contribution is detailed in Appendix~\ref{appendix:algo}. To aid in reviewing mathematical correctness, the full proof is given in Appendix~\ref{appendix:math}. Finally, we provide all anonymized pre- and post-processing, training, and visualization code, along with all seeds used, as supplementary material.

\bibliography{biblio}
\bibliographystyle{iclr2027/iclr2027_conference}

\newpage
\appendix
\section{Conformal Adaptation: Training, Calibration, Inference}
\label{appendix:algo}
\subsection{Training}
Given an admissible operator $A$, a coarse predictor model $\mathcal{F}_{\coursePart}$ is trained together with the refinement operator $\mathcal{G}$. In general, training the two models is a three-step process: 

\begin{enumerate}
    \item \textbf{Train the coarse model}: 
    \begin{itemize}
        \item Downsample the input to the coarse partitioning, and aggregate the labels into coarse cell weights with an admissible operator $A$
    \end{itemize}
    \item \textbf{Extract sparse training data}:
    \begin{itemize}
        \item Rather than the complete fine-grained samples, the sparse refiner is trained on small patches (that will be selected by the coarse model)
        \item The sparse models can be trained either with `simulated' sparse data that is selected by an oracle or  it can be selected using the now trained coarse model $\mathcal{F}_{\coursePart}$ at a training selection level $\lambda_{\text{train}}$. Setting $\lambda_{\text{train}}=\hat\lambda$ (Algorithm~\ref{alg:calibration}) trains the refiner on the certified distribution it will see at inference, at the cost of one refiner per calibrated $\alpha$
    \end{itemize}
    \item \textbf{Train the refiner model}
    \begin{itemize}
        \item The refiner model is trained on the extracted sparse patch-level data.
    \end{itemize}
\end{enumerate}

Throughout this section we use $\mathcal{G}$ for $\mathcal{G}_{\coursePart\rightarrow\finePart}$ for conciseness. 

\begin{algorithm}[h]
\caption{Training the Coarse and Selective Refinement Models}
\label{alg:training}
\begin{algorithmic}[1]
\Require Training dataset $\mathcal{D}_{\text{train}} = \{(X_j^{\finePart}, Y_j^{\finePart})\}_{j=1}^{N}$, downsampling operator $\operatorname{Downsample}(\cdot)$, admissible aggregation operator $A$, scoring function $s(\cdot)$, training selection level $\lambda_{\text{train}} \in [0,1]$.
\Ensure Trained coarse model $\mathcal{F}_{\coursePart}$, trained refinement model $\mathcal{G}$.

\Statex \textbf{// Step 1: Coarse Data Preparation \& Training}
\State Construct downsampled coarse dataset:
\Statex \quad $\mathcal{D}_{\coursePart} \gets \bigl\{ \bigl( \operatorname{Downsample}(X_j^{\finePart}),\, w(Y_j^{\finePart}) \bigr) \bigr\}_{j=1}^N$, where $w(Y)=(w_{c}(Y))_{c\in\coursePart}$
\State Train $\mathcal{F}_{\coursePart} \colon \cX^{\coursePart} \to \cY^{\coursePart}$ by minimizing standard empirical risk on $\mathcal{D}_{\coursePart}$

\Statex \textbf{// Step 2: Refinement Data Extraction via Coarse Selection}
\State Initialize refinement dataset $\mathcal{D}_{\text{refine}} \gets \emptyset$
\For{each sample $(X_j^{\finePart}, Y_j^{\finePart}) \in \mathcal{D}_{\text{train}}$}
    \State Compute coarse prediction: $\hat{O}_j^{\coursePart} \gets \mathcal{F}_{\coursePart}(X_j^{\coursePart})$
    \For{each coarse cell $c_k \in \coursePart$}
        \State Select cell by Eq.~\ref{eq:mask}: $\sigma_j[c_k] \gets \mathbb{I}[s(X_j^{\coursePart})[c_k] \ge 1-\lambda_{\text{train}}]$
        \Statex \qquad\quad \textit{(oracle variant: $\sigma_j[c_k] \gets \mathbb{I}[w_{c_k}(Y_j) > 0]$)}
        \If{$\sigma_j[c_k] = 1$}
            \State Extract fine patch $x_{j,k}^{\finePart} \gets X_j^{\finePart}\big|_{c_k}$
            \State Extract ground-truth fine label $y_{j,k}^{\finePart} \gets Y_j^{\finePart}\big|_{c_k}$
            \State $\mathcal{D}_{\text{refine}} \gets \mathcal{D}_{\text{refine}} \cup \left\{ \left( (x_{j,k}^{\finePart}, \hat{O}_j^{\coursePart}[c_k]),\, y_{j,k}^{\finePart} \right) \right\}$
        \EndIf
    \EndFor
\EndFor

\Statex \textbf{// Step 3: Train Refinement Model}
\State Train local refinement operator $ \mathcal{G}\colon \cX^{\finePart}\big|_{c_k} \times \cY^{\coursePart}[c_k] \to \cY^{\finePart}\big|_{c_k}$ on $\mathcal{D}_{\text{refine}}$
\State \Return $\mathcal{F}_{\coursePart}, \mathcal{G}$
\end{algorithmic}
\end{algorithm}

\subsection{Calibration}
The CRC algorithm is run on the low-resolution partitioning as seen in Algorithm~\ref{alg:calibration}. It uses the missed-signal-mass loss of Eq.~\ref{eq:ca-loss}, which depends only on the coarse selection and the cell weights, so the refiner $\mathcal{G}$ plays no part in calibration.
\begin{algorithm}[h]
\caption{Conformal Risk Calibration (Threshold Selection)}
\label{alg:calibration}
\begin{algorithmic}[1]
\Require Calibration set $\mathcal{D}_{\text{cal}} = \{(X_i^{\finePart}, Y_i^{\finePart})\}_{i=1}^{n}$, trained coarse model $\mathcal{F}_{\coursePart}$, downsampling operator $\operatorname{Downsample}(\cdot)$, admissible aggregation operator $A$, scoring function $s(\cdot)$, risk level $\alpha \in (0, 1)$, grid of candidates $\Lambda \subseteq [0,1]$ with $1 \in \Lambda$.
\Ensure Calibrated selection threshold $\hat{\lambda}$.

\For{$i = 1$ \textbf{to} $n$}
    \State $X_i^{\coursePart} \gets \operatorname{Downsample}(X_i^{\finePart})$
    \State $\hat{O}_i^{\coursePart} \gets \mathcal{F}_{\coursePart}(X_i^{\coursePart})$
    \State Cell weights (Eq.~\ref{eq:weight}): $w_{c_k}(Y_i) \gets A\bigl(Y_i^{\finePart}\big|_{c_k}\bigr)$ for each $c_k \in \coursePart$
\EndFor
\For{each candidate threshold $\lambda \in \Lambda$}
    \For{$i = 1$ \textbf{to} $n$}
        \State Form selection vector $\sigma_\lambda \in \{0, 1\}^{\coursePart}$:
        \Statex \qquad\quad $\sigma_\lambda[c_k] \gets \mathbb{I}[s(X_i^{\coursePart})[c_k] \ge 1-\lambda]$
        \State Missed signal mass (Eq.~\ref{eq:ca-loss}, $0/0 := 0$):
        \Statex \qquad\quad $L_i(\lambda) \gets \sum_{k:\,\sigma_\lambda[c_k]=0} w_{c_k}(Y_i) \,\big/\, \sum_{k} w_{c_k}(Y_i)$
    \EndFor
    \State Compute empirical risk: $\widehat{R}_n(\lambda) \gets \frac{1}{n} \sum_{i=1}^n L_i(\lambda)$
\EndFor
\State Select threshold guaranteeing controlled risk ($L \in [0,1]$, so $B=1$):
\Statex \quad $\hat{\lambda} \gets \inf \left\{ \lambda \in \Lambda : \tfrac{1}{n+1}\bigl(n\,\widehat{R}_n(\lambda) + 1\bigr) \le \alpha \right\}$
\State \Return $\hat{\lambda}$
\end{algorithmic}
\end{algorithm}

\subsection{Inference}
Once trained and calibrated, the coarse and refiner models $\mathcal{F}_{\coursePart}$ and $\mathcal{G}$ can be used for inference along with the calibrated threshold $\hat{\lambda}$. The inference algorithm is detailed in Algorithm~\ref{alg:inference}.

\begin{algorithm}[h]
\caption{Conformal Adaptation Inference}
\label{alg:inference}
\begin{algorithmic}[1]
\Require Test input $X^{\finePart}$, models $\mathcal{F}_{\coursePart}, \mathcal{G}$, calibrated threshold $\hat{\lambda}$, downsampling operator $\operatorname{Downsample}(\cdot)$, scoring function $s(\cdot)$.
\Ensure Sparsely refined prediction $\hat{Y}^{\finePart}$.

\State Downsample coarse input: $X^{\coursePart} \gets \operatorname{Downsample}(X^{\finePart})$
\State Run coarse model: $\hat{O}^{\coursePart} \gets \mathcal{F}_{\coursePart}(X^{\coursePart})$
\State Initialize fine canvas: $\hat{Y}^{\finePart} \gets \operatorname{Upsample}(\hat{O}^{\coursePart})$
\State Determine sparse refinement set: 
\Statex \quad $S \gets \left\{ c_k \in \coursePart : s(X^{\coursePart})[c_k] \ge 1-\hat{\lambda} \right\}$

\For{each coarse cell $c_k \in S$} \Comment{Execute refinement strictly on sparse locations}
    \State $\hat{Y}^{\finePart}\big|_{c_k} \gets \mathcal{G}\left(X^{\finePart}\big|_{c_k}, \hat{O}^{\coursePart}[c_k]\right)$
\EndFor

\State \Return $\hat{Y}^{\finePart}$
\end{algorithmic}
\end{algorithm}
\newpage
\section{Derivation of Guarantees}
\label{appendix:math}
\begin{lemma}\label{lem:crc-hyp}
For every $(X,Y)$, the loss of Eq.~\ref{eq:ca-loss} satisfies
$L(\lambda;X,Y)\in[0,1]$, is non-increasing and right-continuous in
$\lambda\in[0,1]$, and satisfies $L(1;X,Y)=0$.
\end{lemma}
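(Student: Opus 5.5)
The plan is to fix an arbitrary pair $(X,Y)$ and treat $L(\cdot;X,Y)$ as a function of $\lambda$ alone. Two facts do all the work. First, the weights $w_{c_k}(Y)$ are fixed non-negative numbers by Eq.~\ref{eq:weight}. Second, $\coursePart=\{c_1,\dots,c_K\}$ is finite. I first dispose of the degenerate case $W:=\sum_{c_k\in\coursePart}w_{c_k}(Y)=0$: the convention $0/0:=0$ gives $L(\lambda;X,Y)\equiv 0$, and every claim holds trivially. From then on I assume $W>0$. In that case $L(\lambda;X,Y)=W^{-1}\sum_{c_k\notin S_\lambda(X)}w_{c_k}(Y)$.

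\textbf{Boundedness and the endpoint.} The numerator is a sum over a subset of the non-negative terms making up $W$, so $0\le L\le 1$. For $\lambda=1$, Eq.~\ref{eq:mask} selects $c_k$ iff $s(X^{\coursePart})[c_k]\ge 0$. This always holds because the score field takes values in $[0,1]$. Hence $S_1(X)=\coursePart$, the numerator is an empty sum, and $L(1;X,Y)=0$.

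\textbf{Monotonicity.} If $\lambda_1\le\lambda_2$, then $1-\lambda_2\le 1-\lambda_1$, so $S_{\lambda_1}(X)\subseteq S_{\lambda_2}(X)$, as already noted after Eq.~\ref{eq:mask}. The complement $\coursePart\setminus S_{\lambda_2}(X)$ is therefore contained in $\coursePart\setminus S_{\lambda_1}(X)$. Since every dropped term is non-negative, $L(\lambda_2)\le L(\lambda_1)$.

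\textbf{Right-continuity.} This is the only step that needs care, because the selection uses a closed inequality $\ge$. Fix $\lambda_0\in[0,1)$ and take any sequence $\lambda_m\downarrow\lambda_0$. Then the thresholds $t_m=1-\lambda_m$ increase to $t_0=1-\lambda_0$, and I claim $\bigcap_m S_{\lambda_m}(X)=S_{\lambda_0}(X)$:
\begin{itemize}
\item If $s[c_k]\ge t_0$, then $s[c_k]\ge t_m$ for every $m$, since $t_m\le t_0$.
\item Conversely, if $s[c_k]\ge t_m$ for all $m$, then $s[c_k]\ge\sup_m t_m=t_0$.
\end{itemize}
Because the sets are nested and there are only $K$ cells, the decreasing sequence $S_{\lambda_m}(X)$ stabilizes. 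So $S_{\lambda_m}(X)=S_{\lambda_0}(X)$ for all large $m$, and $L(\lambda_m)=L(\lambda_0)$ eventually.

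Equivalently, $L(\cdot;X,Y)$ is a non-increasing step function with at most $K$ jumps, located at $\lambda=1-s[c_k]$. It takes its new, lower value exactly at each jump point, which is precisely right-continuity. I expect the only point to check carefully is the direction of the inequality: using $\ge$ rather than $>$ in Eq.~\ref{eq:mask} is what places each jump so that $L$ is right-continuous rather than left-continuous.
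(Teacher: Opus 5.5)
Your proposal is correct and follows the paper's proof essentially step for step. Boundedness comes from the numerator being a sub-sum of non-negative weights, and monotonicity from the nestedness of $S_\lambda(X)$; right-continuity comes from the closed threshold together with the finiteness of $\coursePart$, and $L(1;X,Y)=0$ from $S_1(X)=\coursePart$ because scores lie in $[0,1]$. Your explicit handling of the $W=0$ case and your sequence argument for right-continuity just spell out details the paper compresses into one line each.
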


\begin{proof}
Non-negativity of $A$ makes every $w_{c}(Y)\ge 0$; the numerator sums a
subset of the terms of the denominator, so $L\in[0,1]$ (with $0/0:=0$).
The denominator does not depend on $\lambda$. If $\lambda_1\le\lambda_2$ then
$S_{\lambda_1}(X)\subseteq S_{\lambda_2}(X)$ by Eq.~\ref{eq:mask}, so passing
from $\lambda_1$ to $\lambda_2$ deletes non-negative terms from the numerator
and $L(\lambda_2;X,Y)\le L(\lambda_1;X,Y)$. Since $\coursePart$ is finite, the score
field $s$ takes finitely many values, and the closed threshold in
Eq.~\ref{eq:mask} makes $\lambda\mapsto S_\lambda(X)$ right-continuous; hence so
is $L$. At $\lambda=1$ every cell satisfies $s(X^{\coursePart})[c]\ge 0$, so
$S_1(X)=\coursePart$, the numerator is an empty sum, and $L(1;X,Y)=0$.
\end{proof}

\begin{lemma}\label{lem:commute-seg}
Under Conditions~\ref{cond:c1} and~\ref{cond:c2}, for every $Y\in\cY^{\finePart}$
and every $c\in\coursePart$,
\[
  w_{c}(Y)>0\iff c\in\mathcal{O}(Y).
\]
Consequently, for every $\lambda\in[0,1]$,
\[
  L(\lambda;X,Y)=0
  \iff
  \mathcal{O}(Y)\subseteq S_\lambda(X),
\]
and the right-hand event depends on neither $A$ nor $\lambda$ except through
$S_\lambda(X)$. In particular all admissible operators induce losses with the
same zero set.
\end{lemma}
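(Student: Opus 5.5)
The first claim follows from the definition of $\mathcal{O}(Y)$ and a case split on whether $c$ is occupied. Fix $Y\in\cY^{\finePart}$ and $c\in\coursePart$, and write $y=Y^{\finePart}|_c$. By Eq.~\ref{eq:weight}, $w_c(Y)=A(y)$.

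\textbf{Case $c\in\mathcal{O}(Y)$.} By definition there is some $u\in\finePart|_c$ with $y[u]\neq\varnothing$. Condition~\ref{cond:c2} then gives $A(y)>0$.

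\textbf{Case $c\notin\mathcal{O}(Y)$.} Every fine site $u\in\finePart|_c$ has $y[u]=\varnothing$, so $y\equiv\varnothing$. Condition~\ref{cond:c1} gives $A(y)=0$, and hence $w_c(Y)$ is not positive.

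Together the two cases give $w_c(Y)>0\iff c\in\mathcal{O}(Y)$. The dichotomy is exhaustive because the fine sites of $c$ are exactly $\finePart|_c$. This uses $\coursePart\preceq\finePart$, so occupancy of $c$ is decided entirely by $y$.

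For the loss equivalence, recall that by Lemma~\ref{lem:crc-hyp} every $w_c(Y)\ge 0$. Two cases arise from the denominator.

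\textbf{Case $\sum_{c\in\coursePart}w_c(Y)=0$.} Then every $w_c(Y)=0$, so by the first part $\mathcal{O}(Y)=\varnothing$. The convention $0/0:=0$ gives $L=0$, and $\varnothing\subseteq S_\lambda(X)$ holds trivially, so both sides of the equivalence are true.

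\textbf{Case $\sum_{c\in\coursePart}w_c(Y)>0$.} Here $L(\lambda;X,Y)=0$ iff the numerator $\sum_{c\notin S_\lambda(X)}w_c(Y)$ vanishes. This is a sum of non-negative terms, so it vanishes iff $w_c(Y)=0$ for every $c\notin S_\lambda(X)$. By the first part, that says no occupied cell lies outside $S_\lambda(X)$, i.e.\ $\mathcal{O}(Y)\subseteq S_\lambda(X)$.

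For the remaining remarks, $\mathcal{O}(Y)$ is defined from $Y$ and $\varnothing$ alone, without reference to $A$. So the event $\{\mathcal{O}(Y)\subseteq S_\lambda(X)\}$ involves $A$ and $\lambda$ only through $S_\lambda(X)$. Any two admissible operators $A,A'$ therefore have losses vanishing on exactly the same set of $(\lambda,X,Y)$.

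There is no real obstacle. The only step needing care is the zero-denominator case, where the $0/0$ convention has to be reconciled with the equivalence; it works because $\mathcal{O}(Y)$ is then empty.
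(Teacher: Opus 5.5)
Your proof is correct and follows the paper's argument essentially step for step. You get the occupancy equivalence from Conditions~\ref{cond:c1} and~\ref{cond:c2} by a case split on whether $c$ is occupied, then treat the degenerate case with the $0/0$ convention and the positive-denominator case by non-negativity of the summands; the only cosmetic difference is that you split on whether the denominator vanishes rather than on whether $\mathcal{O}(Y)=\emptyset$, and these are equivalent by the first part.
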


\begin{proof}
If $c\notin\mathcal{O}(Y)$ then $Y^{\finePart}|_{c}\equiv\varnothing$ and
Condition~\ref{cond:c1} gives $w_c(Y)=0$. If $c\in\mathcal{O}(Y)$ then some
fine site of $c$ carries a non-null label and Condition~\ref{cond:c2} gives
$w_c(Y)>0$. This proves the equivalence.

For the second claim, suppose first that $\mathcal{O}(Y)=\emptyset$. Then all
weights vanish, the convention $0/0:=0$ gives $L=0$, and the inclusion holds
trivially. Otherwise the denominator of Eq.~\ref{eq:ca-loss} is strictly
positive, so $L(\lambda;X,Y)=0$ iff $\sum_{c\notin S_\lambda(X)}w_c(Y)=0$,
iff $w_c(Y)=0$ for every $c\notin S_\lambda(X)$ (all terms being
non-negative), iff no occupied cell lies outside $S_\lambda(X)$ by the
equivalence just proved.
\end{proof}

\begin{lemma}\label{lem:commute-refine}
Let $\sigma\in\{0,1\}^{\coursePart}$ with selected set $S$, and write
$\hat Y^{\finePart}_{\sigma}:=\cG_{\coursePart\to\finePart}(X^{\finePart},\hat O^{\coursePart},\sigma)$.
Then
\[
  \hat Y^{\finePart}_{\sigma}\big|_{c}=\hat Y^{\finePart}_{\mathbf{1}}\big|_{c}
  \quad\text{for every } c\in S,
  \qquad
  \hat Y^{\finePart}_{\sigma}\big|_{c}=\hat Y^{\finePart}_{\mathbf{0}}\big|_{c}
  \quad\text{for every } c\notin S,
\]
where $\hat Y^{\finePart}_{\mathbf 0}=\operatorname{Upsample}(\hat O^{\coursePart})$.
Consequently, for any cellwise functional $\phi$ (i.e.\ $\phi(Y)[c]$ depends
only on $Y|_c$),
\[
  \phi\bigl(\hat Y^{\finePart}_{\sigma}\bigr)[c]
  = \sigma[c]\,\phi\bigl(\hat Y^{\finePart}_{\mathbf 1}\bigr)[c]
  + \bigl(1-\sigma[c]\bigr)\,\phi\bigl(\hat Y^{\finePart}_{\mathbf 0}\bigr)[c],
\]
and in particular
\[
  \mathcal{O}\bigl(\hat Y^{\finePart}_{\sigma}\bigr)
  = \bigl(S\cap\mathcal{O}(\hat Y^{\finePart}_{\mathbf{1}})\bigr)
  \,\cup\,
  \bigl(\mathcal{O}(\hat Y^{\finePart}_{\mathbf{0}})\setminus S\bigr).
\]
Sparse refinement is a cellwise selection between two fields, neither of which
depends on $\sigma$.
\end{lemma}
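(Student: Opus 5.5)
The argument is a direct unpacking of the piecewise definition of the refinement operator in Eq.~\ref{eq:piecewise-G}, together with the inference procedure of Algorithm~\ref{alg:inference}. There, the fine canvas is first initialized as $\operatorname{Upsample}(\hat O^{\coursePart})$, and then the restriction to each selected cell is overwritten by $\mathcal{G}_{c}(X^{\finePart}|_c,\hat O^{\coursePart}[c])$.

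The first step is to record that, for a fixed cell $c$, the output $\hat Y^{\finePart}_\sigma|_c$ depends on $\sigma$ only through the single bit $\sigma[c]$. When $\sigma[c]=1$ it equals $\mathcal{G}_c(X^{\finePart}|_c,\hat O^{\coursePart}[c])$. When $\sigma[c]=0$ it equals $\operatorname{Upsample}(\hat O^{\coursePart})|_c$. This uses two facts from the construction:
\begin{itemize}
\item the cells of $\coursePart$ partition the fine sites, because $\coursePart\preceq\finePart$;
\item each local refiner $\mathcal{G}_c$ reads only $X^{\finePart}|_c$ and $\hat O^{\coursePart}[c]$, neither of which depends on $\sigma$.
\end{itemize}
For the upsampled canvas I will assume that $\operatorname{Upsample}(\hat O^{\coursePart})|_c$ is a fixed function of $\hat O^{\coursePart}$, so it is also independent of $\sigma$. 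No assumption that it is cellwise is needed, because the comparison is always against the same field.

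Next, I specialize to $\sigma\equiv\mathbf{1}$, which gives $\hat Y^{\finePart}_{\mathbf 1}|_c=\mathcal{G}_c(\cdot)$ for all $c$. Specializing to $\sigma\equiv\mathbf 0$ gives $\hat Y^{\finePart}_{\mathbf 0}=\operatorname{Upsample}(\hat O^{\coursePart})$. Comparing these with the previous step yields the two displayed identities, one for $c\in S$ and one for $c\notin S$.

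For a cellwise functional $\phi$, the value $\phi(\hat Y^{\finePart}_\sigma)[c]$ depends only on $\hat Y^{\finePart}_\sigma|_c$. By the identities above, this restriction equals $\hat Y^{\finePart}_{\mathbf 1}|_c$ when $\sigma[c]=1$ and $\hat Y^{\finePart}_{\mathbf 0}|_c$ when $\sigma[c]=0$. Since $\sigma[c]\in\{0,1\}$, this case split is exactly the convex combination formula in the statement.

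The occupancy identity follows by applying the same reasoning to the cellwise indicator $\phi(Y)[c]=\mathbb{I}[c\in\mathcal{O}(Y)]$. This is cellwise by the definition of $\mathcal{O}$, since whether $c$ is occupied depends only on the labels $Y[u]$ for $u\in\finePart|_c$. The result is that $c\in\mathcal{O}(\hat Y^{\finePart}_\sigma)$ holds exactly when either $c\in S$ and $c\in\mathcal{O}(\hat Y^{\finePart}_{\mathbf 1})$, or $c\notin S$ and $c\in\mathcal{O}(\hat Y^{\finePart}_{\mathbf 0})$. Written as sets, this is the claimed union.

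The only real obstacle is making precise that the refinement is genuinely local. One must rule out that $\mathcal{G}_c$ is influenced by which other cells are selected, for example through batching or neighbouring context. I would state this as part of the reading of Eq.~\ref{eq:piecewise-G}: $\mathcal{G}_c$ is a function only of $(X^{\finePart}|_c,\hat O^{\coursePart}[c])$. Under that reading, everything else is bookkeeping.
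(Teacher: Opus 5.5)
Your proposal is correct and follows essentially the same route as the paper. You read off from Eq.~\ref{eq:piecewise-G} and the canvas initialization in Algorithm~\ref{alg:inference} that each cell's output is either $\mathcal{G}_c(X^{\finePart}|_c,\hat O^{\coursePart}[c])$ or the $\sigma$-independent canvas $\operatorname{Upsample}(\hat O^{\coursePart})|_c$. You then obtain the $\phi$ formula by evaluating on each cell in the two cases, and the occupancy identity as the instance $\phi=\mathbb{I}[c\in\mathcal{O}(\cdot)]$, exactly as the paper does. Your side remarks (the upsampling need not be cell-local, and what matters is only that the refiner's arguments do not depend on $\sigma$) match the paper's accompanying Remark and its one-line justification.
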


\begin{proof}
By Eq.~\ref{eq:piecewise-G}, the output of $\cG_{\coursePart\to\finePart}$ on a
selected cell $c$ is $\cG_{c}(X^{\finePart}|_c,\hat O^{\coursePart}[c])$. Its
arguments do not depend on $\sigma$, so it coincides with the value on $c$ when
$\sigma=\mathbf 1$. On an unselected cell, $c$ is never overwritten, so the
output retains the initial canvas
$\operatorname{Upsample}(\hat O^{\coursePart})|_c$
(Algorithm~\ref{alg:inference}, line~3). This canvas is computed from
$\hat O^{\coursePart}$ alone, so it is independent of $\sigma$ and equals
$\hat Y^{\finePart}_{\mathbf 0}|_c$. The formula for a cellwise $\phi$ follows by
evaluating $\phi$ on each cell in the two cases. The occupancy identity is the
instance $\phi=\mathbb{I}[c\in\mathcal O(\cdot)]$.
\end{proof}

\begin{remark}
If $\operatorname{Upsample}$ maps every cell to $\varnothing$ (e.g.\ a
background fill in segmentation), then
$\mathcal O(\hat Y^{\finePart}_{\mathbf 0})=\emptyset$ and the lemma reduces to
$\mathcal O(\hat Y^{\finePart}_\sigma)=S\cap\mathcal O(\hat Y^{\finePart}_{\mathbf 1})$.
For regression, $\operatorname{Upsample}$ may be any fixed interpolant
(nearest, bilinear, etc.). It need not be cell-local; it only needs to be
computed from $\hat O^{\coursePart}$ independently of $S$.
\end{remark}

\mytheorem*

\begin{proof}
For (ii), we see that Lemma~\ref{lem:crc-hyp} verifies the hypotheses of the CRC guarantee
(Eq.~\ref{eq:crc}) for the family $\{L(\lambda;\cdot,\cdot)\}_{\lambda\in[0,1]}$,
which under exchangeability yields
$\E[L(\hat\lambda;X,Y)]\le\alpha$. Claim~(i) is
Lemma~\ref{lem:commute-refine} applied with $\sigma=\sigma_{\hat\lambda}(X)$,
and it identifies $\bigcup_{c\in D}\finePart|_{c}$ as the exact support of the
discrepancy between the sparse and dense pipelines, which is the region whose
mass (ii) bounds. The tail bound in (iii) is Markov's inequality applied to the
non-negative variable $L(\hat\lambda;X,Y)$, and the characterization of its
zero set is Lemma~\ref{lem:commute-seg}.
The discarded mass fraction is precisely the loss at the calibrated threshold:
\[
  L(\hat\lambda;X,Y)
  = \frac{\sum_{c\in D}w_c(Y)}{\sum_{c\in\coursePart}w_c(Y)},
  \qquad D=\coursePart\setminus S_{\hat\lambda}(X),
\]
with $0/0:=0$. By Lemma~\ref{lem:crc-hyp}, $L(\hat\lambda;X,Y)\ge 0$, and by (ii),
$\E[L(\hat\lambda;X,Y)]\le\alpha$. For $t\in(0,1]$, Markov's inequality gives
\[
  \Pr\bigl(L(\hat\lambda;X,Y)\ge t\bigr)
  \;\le\; \frac{\E[L(\hat\lambda;X,Y)]}{t}
  \;\le\; \frac{\alpha}{t},
\]
and therefore
\[
  \Pr\bigl(L(\hat\lambda;X,Y)\le t\bigr)
  \;\ge\; \Pr\bigl(L(\hat\lambda;X,Y)< t\bigr)
  \;=\; 1-\Pr\bigl(L(\hat\lambda;X,Y)\ge t\bigr)
  \;\ge\; 1-\frac{\alpha}{t}.
\]
That is, with probability at least $1-\alpha/t$, the cells in $D$ carry at most
a $t$-fraction of the total signal mass $\sum_{c\in\coursePart}w_c(Y)$.

For the zero set, Lemma~\ref{lem:commute-seg} holds for every
$\lambda\in[0,1]$, so it applies to each realisation of $\hat\lambda$:
\[
  L(\hat\lambda;X,Y)=0
  \iff \mathcal{O}(Y)\subseteq S_{\hat\lambda}(X).
\]
This event depends on $A$ only through $S_{\hat\lambda}(X)$. Its form is
therefore the same for every admissible $A$, although the event itself can
still vary with $A$ through $\hat\lambda$.
\end{proof}

\section{Experiment Details}

\subsection{MSD Brain Tumor Segmentation}
\label{appendix:decathlon}

The MSD experiment's data, model, and calibration details are presented in this
section.

\subsubsection{Dataset Details}

We use the MSD \emph{Task01: Brain Tumour} dataset
~\citep{MedDecathlon}, comprising 484 multimodal brain MRI scans, each with four
co-registered $240\times240\times155$-voxel sequences (FLAIR, T1-weighted, contrast-
enhanced T1-weighted, and T2-weighted) and a voxel-wise annotation into three positive tumor subregions (edema, non-enhancing tumor, and enhancing tumor) plus background. The official MSD test split ships without public labels, so, following standard practice, we instead use MONAI's deterministic 80/20 split of the released
\emph{training} set: the resulting 96-scan validation subset is held out entirely
and used as our test set, while the remaining 388-scan training subset is further split
into a 320-scan model-fitting set and a 68-scan (17.5\%) calibration set, sampled with a
fixed random seed independent of the 80/20 split. The calibration set is held out of all
first-stage model training; it is used exclusively to calibrate the conformal risk-control
threshold $\hat\lambda$.

\subsubsection{Data Preprocessing}

Each modality is independently normalized by clipping its non-zero voxels to their own
99.5th percentile and rescaling to $[0, 1]$; background (zero) voxels are left untouched.
Volumes are then spatially padded to the nearest multiple of the coarse downsampling
factor $r=4$ on every axis, so
that the coarse pass's $4\times4\times4$ average/max pooling divides the volume exactly.
The coarse (low-resolution) image is obtained by trilinear downsampling of the padded
volume by $r$; the corresponding coarse label is obtained by non-overlapping max-pooling
over the same $4\times4\times4$ blocks, so that a positive (tumor) voxel is never
averaged away at low resolution. For training the first-stage (coarse and full-resolution)
segmentation models, we sample fixed $64\times64\times32$-voxel crops per scan, biased
1:1 toward foreground-containing regions, followed by random flips along each of the
three spatial axes ($p=0.5$ each) and a random $90^\circ$ in-plane rotation ($p=0.5$),
then pad back to the fixed crop size. Evaluation, calibration, and the refinement stage
instead operate on the full padded volume via sliding-window inference with the same
$64\times64\times32$ window.

\subsubsection{Model Architecture}

We compare three segmentation architectures, all trained from scratch (no pretrained
weights) and used identically as the coarse-pass model and as the full-resolution baseline -- only the spatial resolution of the input differs between the two, isolating
the effect of input resolution from any change in model capacity.

\begin{itemize}
\item \textbf{U-Net}~\citep{unet}: a 3-level 3D U-Net with channel widths $(16, 32, 64)$ and stride-2
  downsampling at each level.
\item \textbf{Swin-UNETR}~\citep{swin-unet, swinunetr-v2}: a Swin Transformer encoder with a convolutional U-Net-style
  decoder, at its default configuration.
\item \textbf{VISTA3D}~\citep{vista3d}: a segmentation foundation-model architecture composed of a
  SegResNet-based image encoder (downsampling blocks $(1,2,2,4,4)$, instance
  normalization, 48-channel embedding) and a class-mapping classification head. We run
  VISTA3D in its automatic (class-vector) segmentation mode only, without any point
  prompts, so that it can be invoked identically to the other two architectures.
\end{itemize}

The region-refinement stage uses a separate, lightweight two-branch 3D convolutional
network, common to all three first-stage architectures. Each selected coarse cell is a
$4\times4\times4$ block; adding a one-voxel halo on every side gives a
$6\times6\times6$ high-resolution window. One branch encodes this high-resolution image
window and the other encodes the upsampled coarse-model mask over the same window (two
$3\times3\times3$ convolutions, 16 channels, GELU activations); the two branches
are fused by addition and passed through two further convolutions before a $1\times1\times1$
output head. The refiner's full $6\times6\times6$ output is center-cropped to the
central $4\times4\times4$ block, which is what is scored against the ground truth and
scattered back into the full-resolution prediction; the halo voxels serve only as
context for the convolutions.

\subsubsection{Training and Calibration Details}

Each of the three first-stage architectures is trained at both full and $4\times$-downsampled
resolution, for three random seeds, with Adam (learning rate $10^{-4}$, batch size 8, up
to 100 epochs), minimizing an equally weighted sum of Dice loss and cross-entropy. A
single region-refinement network, shared by all architectures and seeds, is trained for one
epoch with Adam (learning rate $10^{-3}$, batch size 256), minimizing cross-entropy on the
central $4\times4\times4$ block only. Its training windows are the cells the
coarse model selects at its $\alpha=0.1$ calibrated threshold on the 388-scan training
subset, with that model's softmax as the coarse-mask input. CRC is calibrated on each architecture's coarse-pass
foreground probability, per training seed, using the missed-positive-cell-fraction loss
of Eq.~\ref{eq:ca-loss} on the 68-scan calibration set described above; the calibrated
threshold $\hat\lambda$, the resulting retained cell fraction, and downstream DSC and FLOPs are then evaluated on the fully held-out 96-scan test set.

\subsubsection{Qualitative Examples}
\label{appendix:decathlon-qualitative}

Fig.~\ref{fig:seg-contour-grid} in the main text (Sec.~\ref{sec:seg}) compares predicted
tumor boundaries directly; Fig.~\ref{fig:seg-threshold-sweep} here shows, for the same
three test scans, how the region CRC sends to refinement (Eq.~\ref{eq:crc})
shrinks toward the tumor as the target miss rate $\alpha$ grows, together with the
resulting measured per-scan cell miss rate. Thus, the coverage guarantee behind
Table~\ref{tab:model_comparison}'s cell miss rate column is visible at the level of a single slice,
not only as an aggregate statistic.

\begin{figure}[h]
    \centering
    \includegraphics[width=0.8\linewidth]{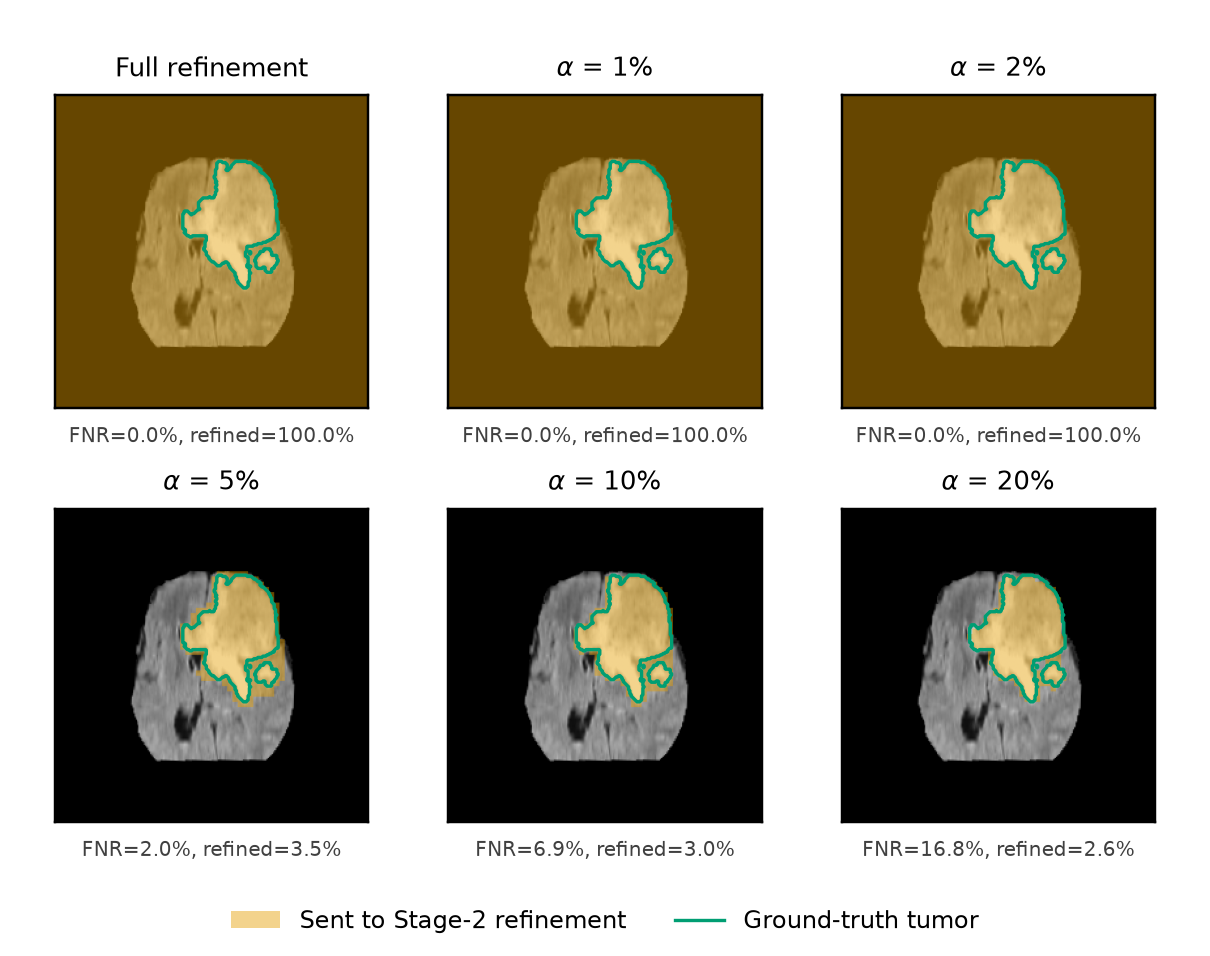}
    \\[0.4em]
    \includegraphics[width=0.8\linewidth]{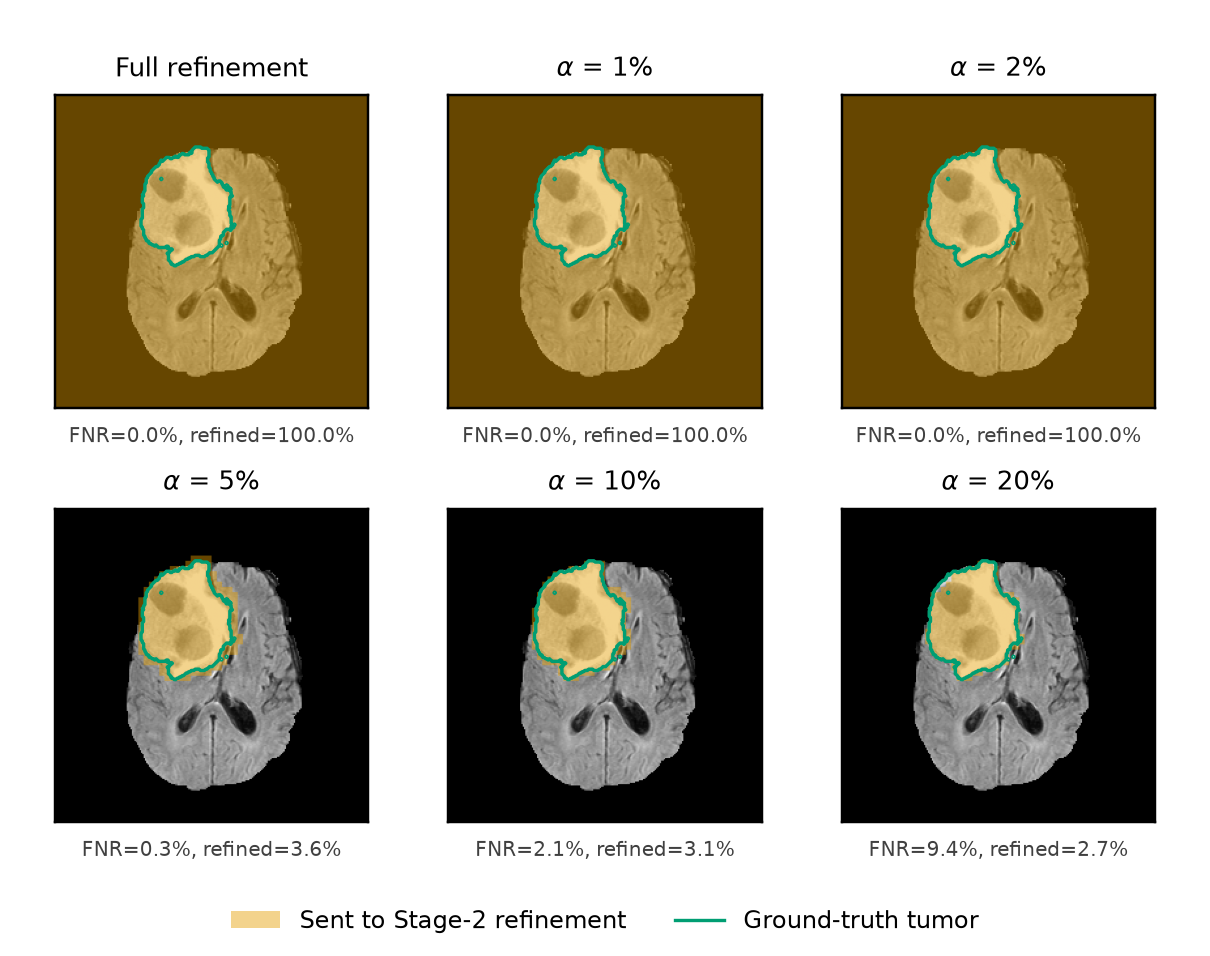}
    \caption{Selected (refined) region against the calibrated threshold for two held-out
    test scans (Swin-UNETR, seed 0). "Full refinement" is the reference at foreground-probability
    threshold $0$, i.e.\ $\lambda=1$ in Eq.~\ref{eq:mask} (every foreground-probable cell
    refined); at $\alpha=1\%$ and $2\%$, CRC's calibrated threshold is itself $0$
    ($\hat\lambda=1$) for this architecture/seed.
    As $\alpha$ relaxes, the selected region shrinks
    toward the tumor boundary and the measured cell miss rate rises toward its target.}
    \label{fig:seg-threshold-sweep}
\end{figure}

\subsection{HECKTOR 2026 Head and Neck Tumor Segmentation}
\label{appendix:hn}

\subsubsection{Data Preprocessing}
There are three NIfTI volumes per case: the original CT, the FDG-PET, and the segmentation label. The labels are included as 0 for background, 1 for GTVp (primary), and 2 for GTVn (node). 

The 782 cases are split across 531 train, 125 validation, and 126 test samples. 
The 125-case validation split is the \textbf{CRC calibration set}, not a model-selection set. Note that validation splits are obtained randomly from the training set for the Coarse and Refine models.

\subsubsection{Results}
Full test-set results for the HNC study are provided in Tables~\ref{tab:hn-results} and~\ref{tab:hn-lesion-results}. 
It is important to note that the there is a drop in accuracy especially for GTVn predictions when using the Coarse + Refine mechanism. This is particularly due to the fact that there small lesions that are being missed by the coarse model and being enabled by the CRC loss on cell-level misses. We acknowledge this as a negative result for our method as it is a particularly adversarial task given many small lesions in HNCs. A custom designed coarse model and a lesion-based CRC loss can potentially significantly improve our current lesion-based results as highlighted before, is a significant area of future work that can be pursued. 
\begin{table}[h]
\centering
\caption{DSC and FLOPs speedup across models. All rows except nnU-Net are Swin-UNETR. Mean DSC is the unweighted average of GTVp and GTVn; speedup is the ratio of Full-model FLOPs to pipeline FLOPs.}
\label{tab:hn-results}
\begin{tabular}{lcccr}
\toprule
System & GTVp & GTVn & Mean  & Speedup \\
\midrule
nnU-Net (250 ep) & \textbf{0.7938} & \textbf{0.7271} & \textbf{0.7604}  & -- \\
\midrule
Full & 0.7755 & 0.6957 & 0.7356  & 1.0$\times$ \\
Coarse & 0.7487 {\scriptsize $\pm$ 0.0117} & 0.5660 {\scriptsize $\pm$ 0.0307} & 0.6573 {\scriptsize $\pm$ 0.0145} & 68.3$\times$ \\
\midrule
Coarse + Refine ($\alpha{=}0.1$) & 0.7810 {\scriptsize $\pm$ 0.0005} & 0.6646 {\scriptsize $\pm$ 0.0068} & 0.7228 {\scriptsize $\pm$ 0.0034} & 68.1$\times$ \\
\bottomrule
\end{tabular}
\end{table}

\begin{table}[h]
\centering
\caption{Lesion-specific results for our HNC models. All rows except nnU-Net are Swin-UNETR. Unlike the volumetric DSC, we perform post-processing to detect lesions from predicted volumes to calculate lesion detection metrics.}
\label{tab:hn-lesion-results}
\begin{tabular}{lcccccc}
\toprule
& \multicolumn{3}{c}{GTVp} & \multicolumn{3}{c}{GTVn} \\
\cmidrule(lr){2-4}\cmidrule(lr){5-7}
System & Recall & Prec. & DSC & Recall & Prec. & DSC \\
\midrule
nnU-Net (250 ep) & \textbf{0.766} & 0.572 & 0.819 & \textbf{0.615} & 0.378 & \textbf{0.773} \\
\midrule
Full & 0.685 & 0.625 & \textbf{0.822} & 0.553 & \textbf{0.526} & 0.772 \\
Coarse & 0.599 & 0.628 & 0.799 & 0.339 & 0.471 & 0.745 \\
\midrule
Coarse + Refine ($\alpha{=}0.1$) & 0.680 & \textbf{0.638} & 0.814 & 0.465 & 0.475 & 0.732 \\
\bottomrule
\end{tabular}
\end{table}

Qualitative results for the predictions are presented in Fig.~\ref{fig:hecktor-predictions}.
\begin{figure}[h]
    \centering
    \includegraphics[width=0.95\linewidth]{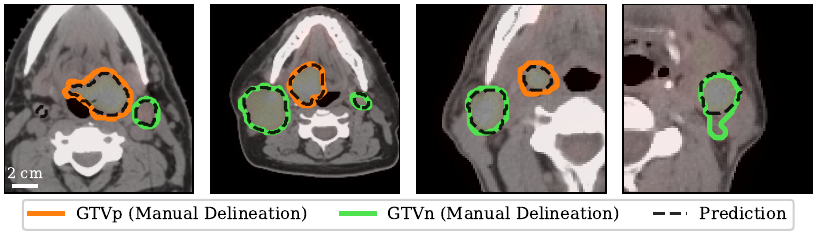}
    \includegraphics[width=0.95\linewidth]{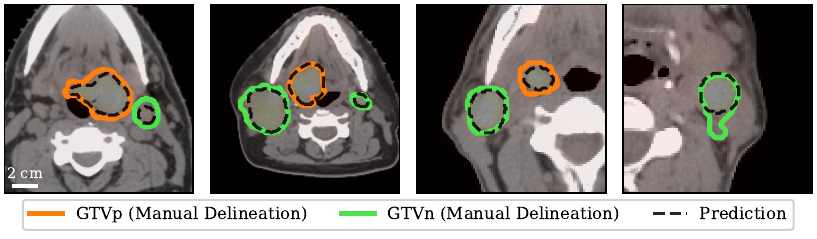}
    \includegraphics[width=0.95\linewidth]{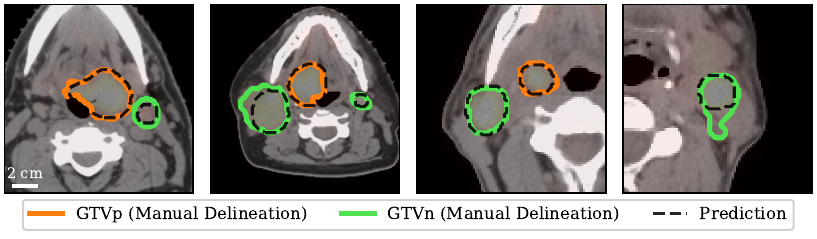}
    \includegraphics[width=0.95\linewidth]{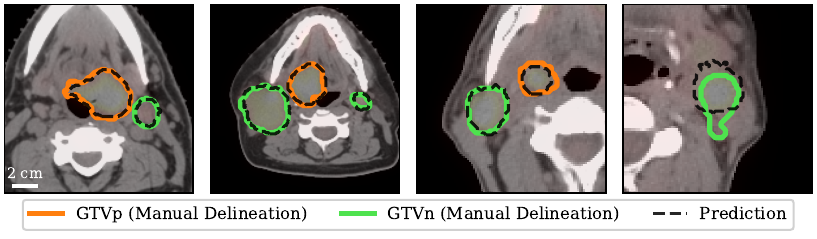}
    \caption{Grid of HECKTOR predictions. From top to bottom: nnU-Net, Swin-UNETR native resolution, Swin-UNETR Coarse, and Swin-UNETR Coarse + Refine.}
    \label{fig:hecktor-predictions}
\end{figure}
\subsection{Threshold selection for Segmentation}
\label{appendix:threshold}
Depending on the model and dataset, the threshold can result in significantly varied FNR as seen in Tab~\ref{tab:fnrvsthreshold}. This highlights the need for using an established threshold finding algorithm like CRC. Naively choosing $\lambda$ can either result in selecting too many cells, diminishing the speedup achieved, or too few cells, resulting an underestimated error. Furthermore, as Tab.~\ref{tab:fnrvsthreshold} shows, different models and tasks have different thresholds. Thus effectively selecting a threshold requires a held-out calibration set, and CRC is a natural algorithm for choosing such a threshold. Also, using CRC allows us to expand our choices beyond solely threshold based methods. The pre-requisite for using CRC is monotonically increasing prediction sets which can be accomplished by other selection criterion as well. 
\begin{table}[H]
    \centering
    \caption{A naively chosen like threshold can result in high FNR as the predicted scores are non-linear compared to the selection. CRC }
    \label{tab:fnrvsthreshold}
    \begin{minipage}{\linewidth}
    \centering
    \resizebox{\linewidth}{!}{%
    \begin{tabular}{llccccc}
    \toprule
    & & \multicolumn{3}{c}{FNR at fixed $\lambda$} & \multicolumn{2}{c}{FNR with CRC ($\alpha = 0.1$)} \\
    \cmidrule(lr){3-5} \cmidrule(lr){6-7}
    Dataset & Model & $\lambda = 0.5$ & $\lambda = 0.1$ & $\lambda = 0.01$ & Calib. & Test \\
    \midrule
    \multirow{3}{*}{Brain Tumor} & UNet & 0.17 {\scriptsize $\pm$ 0.01} & 0.072 {\scriptsize $\pm$ 0.008} & 0.021 {\scriptsize $\pm$ 0.003} & 0.085 {\scriptsize $\pm$ $<$0.001} & 0.097 {\scriptsize $\pm$ 0.006} \\
     & Swin-UNETR & 0.11 {\scriptsize $\pm$ 0.02} & 0.07 {\scriptsize $\pm$ 0.01} & 0.031 {\scriptsize $\pm$ 0.006} & 0.086 {\scriptsize $\pm$ $<$0.001} & 0.091 {\scriptsize $\pm$ 0.001} \\
     & VISTA3D & 0.11 {\scriptsize $\pm$ 0.01} & 0.07 {\scriptsize $\pm$ 0.01} & 0.04 {\scriptsize $\pm$ 0.01} & 0.086 {\scriptsize $\pm$ 0.001} & 0.084 {\scriptsize $\pm$ 0.004} \\
    \midrule
    HECKTOR & Swin-UNETR & 0.86 {\scriptsize $\pm$ 0.02} & 0.25 {\scriptsize $\pm$ 0.02} & 0.13 {\scriptsize $\pm$ 0.01} & 0.092 {\scriptsize $\pm$ 0.001} & 0.072 {\scriptsize $\pm$ 0.009} \\
    \bottomrule
    \end{tabular}%
    }
    \end{minipage}
\end{table}
\subsection{CASCADE Calcium Spike Prediction}
\label{appendix:CASCADE}
The CASCADE experiment data processing, model design, and calibration methods are presented in this section. 

\subsubsection{Dataset Details}

We use the \emph{primary} group of the CASCADE calcium-imaging benchmark: datasets DS30, DS31,
and DS32, recorded with the GCaMP8f/8m/8s indicators respectively. Each dataset file contains
one or more trials, where a trial pairs a fluorescence ($\Delta F/F$) trace with ground-truth
spike times obtained from simultaneous electrophysiological recording; frame rates vary by
recording session, ranging from 8--60\,Hz across the benchmark. The primary group comprises 117
source \texttt{.mat} files spanning 515 valid trials (trials with fewer than 2 recorded spike
events or fewer than 10 fluorescence samples are discarded), at a mean spike-mass occupancy
(fraction of non-empty coarse blocks at $r{=}8$) of $3.2$--$4.0\%$.

Because repeated trials from the same source file are highly dependent draws from the same
underlying neuron (identical indicator, noise level, and firing statistics), we split
\emph{source files}, not individual windows or trials, into train/calibration/test partitions
at a $60/20/20$ ratio, stratified per dataset so all three indicators appear in every partition.
This file-level split is the exchangeable unit for the CRC
calibration; every downstream step (coarse model training, CRC calibration, refiner training,
evaluation) consumes the same fixed seed-0 split, and fluorescence windows are unique between splits. This split rules out test-set leakage; the standard sample-level calibration algorithm is then applied unchanged. 

\subsubsection{Data Preprocessing}

Each trial's raw spike times are binned onto the trial's own fluorescence frame
timestamps via a histogram, and the resulting spike-count sequence is convolved with a Gaussian
kernel (standard deviation $200$\,ms, the CASCADE benchmark's own default) and rescaled to spikes/second
to obtain the fine-resolution regression target $Y^{\finePart}$. The coarse target
$Y^{\coursePart}$ is obtained by \emph{sum}-pooling (not averaging) the raw spike counts by
the coarse block factor $r{=}8$, so that each coarse cell's value is exactly the expected spike
count inside it, matching the aggregation operator of Eq.~\ref{eq:weight} for regression. The
CRC loss is the fraction of missed spike-count mass. The fluorescence trace
itself is $z$-score normalized per source file (median and $1.4826\times$MAD, computed once across
all of a file's trials to avoid leaking window-local statistics), then windowed into
fixed-length segments of $1024$ fine samples ($128$ coarse blocks at $r{=}8$), with stride $512$
for training, calibration, and test.

\subsubsection{Model Architecture}
Both the baseline and coarse networks in the pipeline are 1D convolutional nets sharing a common building block
(\texttt{ConvBlock}: two $\mathrm{Conv1d}(k{=}3)$+GroupNorm+GELU layers) and a common output
head design: a \texttt{QuantileHead} predicting $Q{=}4$ quantiles
($\tau\in\{0.05,0.5,0.9,0.95\}$) by regressing the $\tau{=}0.05$ quantile plus non-negative
(softplus) increments, cumulatively summed to guarantee monotonicity.

\begin{itemize}
\item \textbf{Full baseline $\mathcal{F}_{\finePart}$} (\texttt{FineUNet1D}): a $4$-level 1D
  U-Net, base width $64$, with $\mathrm{MaxPool1d}(2)$ downsampling and nearest-neighbor
  upsample + skip-concatenation on the decoder side, applied to the full $1024$-sample window.
\item \textbf{Coarse model $\mathcal{F}_{\coursePart}$} (\texttt{FineOnLowRes}): reuses
  $\mathcal{F}_{\finePart}$'s architecture and parameter count \emph{exactly}, but is fed the trace
  block-averaged down to $\mathrm{window}/r=128$ samples before the first convolution, rather
  than a separate, narrower network. This isolates the coarse model's speedup to resolution
  reduction alone: at identical capacity to $\mathcal{F}_{\finePart}$, this gives an $8.0\times$ FLOPs
  reduction for the coarse pass (exactly $r$, since a fixed-width conv net's FLOPs scale
  linearly with sequence length), so no reviewer can attribute the pipeline's speedup to simply
  swapping in a smaller-capacity model.
\item \textbf{Refiner $\cG_{\coursePart\to\finePart}$} (\texttt{LocalRefinerUNet1D}): a $2$-level 1D U-Net
  (base width $32$) over a context-padded crop of length $L=r(1+2\times\mathrm{context\_blocks})
  = 136$ samples ($\mathrm{context\_blocks}{=}8$) around each retained coarse block, whose
  full-length decoder output is center-cropped to the middle $r{=}8$ samples. Rather than
  predicting the fine-resolution rate curve $Y^{\finePart}$ directly (which would hard-zero
  discarded neighboring blocks regardless of proximity to a confirmed detection), the refiner
  uses a \texttt{CountHead} (global-average-pool + linear projection) to predict a single
  non-negative spike-\emph{count} estimate per retained block, matching $Y^{\coursePart}$'s
  own resolution. The full continuous rate curve is then
  reconstructed by convolving this sparse, mostly-zero per-block count estimate with the same
  Gaussian kernel used to build $Y^{\finePart}$ from raw spike counts. Since that kernel is
  mass-preserving, bounding missed count mass at the CRC level $\alpha$ (calibrated against
  $Y^{\coursePart}$) also bounds the missed mass of the reconstructed continuous curve at the
  same $\alpha$.  A detection's estimated count can be smeared physically into its discarded
  neighbors rather than being hard-zeroed there during post processing and presentation. \textbf{Note that the certification region is decoupled from the presentation in high resolution.}
\end{itemize}

Table~\ref{tab:cascade-arch} summarizes parameter counts and analytic FLOPs (at
$\mathrm{window}{=}1024$) for the coarse model and full-resolution baseline, computed with each model's default constructor arguments.

\begin{table}[h]\centering
\caption{CASCADE model architectures, parameter counts, and analytic FLOPs. The total FLOPs for the refiner depends on the number of selected windows and is therefore omitted.}
\label{tab:cascade-arch}
\resizebox{\textwidth}{!}{
\begin{tabular}{lccc}\toprule
Model & Role & Params & FLOPs (window$=$1024) \\ \midrule
\texttt{FineUNet1D} & Full baseline ($\mathcal{F}_{\finePart}$) & 10{,}475{,}204 & 2{,}668{,}494{,}848 \\
\texttt{FineOnLowRes} & Coarse ($\mathcal{F}_{\coursePart}$) & 10{,}475{,}204 (same) & 333{,}561{,}856 (8.0$\times$ fewer) \\
\texttt{LocalRefinerUNet1D} & Refiner (count head) & 158{,}721 & -- \\ \bottomrule
\end{tabular}}
\end{table}

\subsubsection{Training and Calibration Details}

All three networks (baseline, coarse model, refiner) are trained with Adam (learning rate
$10^{-3}$) for $30$ epochs, minimizing the quantile pinball loss summed over
$\tau\in\{0.05,0.5,0.9,0.95\}$ for the baseline and coarse model, and mean-squared error against
the block spike count for the refiner's count head. CRC is calibrated on the coarse model's
predicted $\tau{=}0.9$ quantile score field (empirically the best-performing of the median/q90/
q95 candidates by screening ROC AUC), using the missed-signal-mass loss of
Eq.~\ref{eq:ca-loss}, with a retention halo of $1$ coarse block on either side of every
selected block (chosen heuristically).
For the reported operating point ($\alpha{=}0.1$), CRC yields a calibrated $\hat\lambda$ with a raw-score cutoff of $0.877$ and a
retained fraction of $8.4\%$ of coarse blocks on held-out test data, corresponding to the
$4.77\times$ FLOPs speedup reported in Table~\ref{tab:cascade-headline}. A separate refiner is
trained per calibrated $\alpha$ (each refiner's training crops are drawn from that $\alpha$'s
own retention distribution), and reconstructed-curve accuracy is reported as mean absolute
error (MAE) against $Y^{\finePart}$ on the test split, compared against the full-resolution
baseline's own median ($\tau{=}0.5$) prediction on the same windows.

\subsubsection{Results}
The reconstruction results with varying $\alpha$, their corresponding retained fraction, and final error is shown in Tab.\ref{tab:cascade-headline}. 
\begin{table}[h]\centering
\caption{Reconstructed MAE is reported in Hz, with the gap to the full-resolution baseline ($0.250$\,Hz) in parentheses.}
\label{tab:cascade-headline}
\begin{tabular}{lccc}\toprule
$\alpha$ & Retained fraction & FLOPs speedup & Reconstructed MAE ($\Delta$) \\ \midrule
0.01 & 1.00 & 0.89$\times$ & 0.30 ($+0.05$) \\
0.05 & 0.48 & 1.65$\times$ & 0.34 ($+0.09$) \\
0.10 & 0.08 & 4.77$\times$ & 0.32 ($+0.06$) \\
0.20 & 0.07 & 5.21$\times$ & 0.35 ($+0.10$) \\ \bottomrule
\end{tabular}
\end{table}

\subsection{Galaxy Flux Field Super-Resolution}
\label{appendix:galsim}
The galaxy flux field experiment's data generation, model design, training, and calibration methods are
presented in this section.

\subsubsection{Dataset Details}

Each sample is a simulated $1024\times1024$ sky field at a pixel scale of $0.2''$, covering roughly $3.4'\times3.4'$. The number of sources per field is Poisson-distributed with mean $60$, so the realized count varies field to field and occupancy is a statistic over fields rather than a fixed per-field quantity. We generate $3000$ fields in total, partitioned $2000/500/500$ into train, calibration, and test splits. One field is one exchangeable unit for conformal risk control: unlike the calcium-imaging study there is no windowing, so a field is never split
across partitions.

Sources are real galaxies drawn from the COSMOS $25.2$ magnitude-limited sample ($81{,}499$ objects) rather than point sources. We use the analytic S\'{e}rsic
fits to the catalog objects rather than the raw telescope pixel data, so each source carries the real size, shape, orientation, and S\'{e}rsic-index distribution of an observed galaxy but is a smooth, noiseless profile. 

We avoid the raw pixel data because it carries the original exposure's own correlated noise: rescaling a source to this experiment's flux prior can require a factor of more than four orders of magnitude, which would amplify that embedded noise by the same factor and introduce spatially non-stationary contamination unrelated to the sky and read-noise model the experiment is built around.

The three splits draw from \emph{disjoint contiguous slices} of the catalog, in the proportions of the split sizes. We ensure the same galaxy shape (same catalog object, differing only in flux, position, rotation, and noise) cannot appear in both a training field and a test field, since every field otherwise samples uniformly from the whole catalog. This is a known failure mode for this catalog, and holding out objects rather than only images follows standard practice for COSMOS-based generative and deblending studies~\citep{arcelin2021deblending}.

\subsubsection{Data Generation Pipeline}

Every field is reproducible from a single integer seed; all randomness is drawn from one seeded generator, so caching amounts to calling the generator once per seed. For each field we draw the source count and then, for each source, draw a flux log-uniformly on $[10^{2},10^{5}]$ counts, a position uniform in the field interior (a $20$-pixel margin is excluded so sources are not truncated at the boundary), a catalog object from that split's slice, and a uniform rotation. Each source is rendered into a $65\times65$ pixel stamp centered on its position.

The same object is rendered \emph{twice}, in two different roles, and the two blurring operators
are kept strictly separate:
\begin{itemize}
\item \textbf{Input image.} The source is convolved with the instrument point spread function, a
  Kolmogorov profile of full width at half maximum $0.8''$, i.e.\ the forward model of a real
  observation.
\item \textbf{Regression target.} The same source is convolved instead with a small idealized
  Gaussian kernel with a standard deviation of $1.5$ pixels ($0.3''$), and no noise is added.
\end{itemize}
Neither kernel is derived from the other. The target is therefore a deconvolved,
noise-free surface-brightness map of the same scene: the task is simultaneously deblurring and
denoising, not merely upsampling.

After all sources are drawn, synthetic detector noise is added to the input image only, as Poisson
noise about a sky level of $50$ counts plus Gaussian read noise of $5$ counts at unit gain. This is
the only noise in the experiment. The known sky level is subtracted from the image before it is
presented to any network, so a network's input is a background-subtracted, noisy, PSF-convolved
field and its target is the same field's noiseless, sharply rendered flux.

The coarse partition is the $8\times8$-pixel cell grid, giving $128\times128=16{,}384$ coarse
cells per field. The coarse target is obtained by \emph{sum}-pooling the fine target over each
cell, not averaging, so a cell's value is exactly the total flux inside it and matches the
regression aggregation operator of Eq.~\ref{eq:weight}; the CRC loss is then literally the
fraction of missed flux. The coarse model's input is obtained by \emph{average}-pooling the observed
image by the same factor, because the image is a surface brightness rather than a total.

\subsubsection{Model Architecture}

All three networks share a common convolutional block (two $3\times3$ convolutions, each with
group normalization and a GELU nonlinearity) and, for the two predicting networks, a common
output head predicting $Q=4$ quantiles ($\tau\in\{0.05,0.5,0.9,0.95\}$) of the log-transformed
flux. The head regresses the $\tau{=}0.05$ quantile plus non-negative softplus increments,
cumulatively summed, so predicted quantiles are monotone by construction and can never cross.
All flux regression is performed in $\log(1+\cdot)$ space, which is necessary given the
three-decade flux prior.

\begin{itemize}
\item \textbf{Full baseline.} A four-level two-dimensional U-Net of base width $64$, with
  max-pool downsampling and nearest-neighbor upsampling with skip concatenation, applied to the
  full $1024\times1024$ field.
\item \textbf{Coarse pass.} Reuses the full-resolution baseline's architecture and parameter
  count \emph{exactly}, applied to the $8\times$ average-pooled $128\times128$ field, rather than a
  separate narrower network. This isolates the coarse pass's savings to resolution alone: at
  identical capacity it is a $64\times$ reduction in floating-point operations (exactly the square
  of the downsampling factor, since a fixed-width convolutional network's cost scales with the
  number of sites), so the speedup cannot be attributed to substituting a smaller-capacity model.
  Its output is one predicted flux total per coarse cell.
\item \textbf{Refiner.} A small local operator applied independently to each certified cell,
  taking the $12\times12$ full-resolution window formed by the $8\times8$ cell plus a two-pixel
  context border, together with the coarse model's own predicted median flux for that cell as a
  scalar conditioning input, and returning an $8\times8$ flux patch. Rather than re-predicting
  \emph{how much} flux the cell contains, which the coarse model already estimates in a calibrated
  way, the refiner predicts only \emph{where within the cell} the flux lies. Its output is
  rectified, normalized to sum to one over the cell, and then multiplied by the coarse model's
  predicted total. Flux conservation between the coarse and refined stages is therefore exact by
  construction rather than something the refiner must learn, and the refiner is left with the
  purely spatial distribution of the problem, which is what makes a network of $17{,}809$ parameters
  sufficient. Uncertified cells are filled by smoothly upsampling the coarse prediction, so a selection miss degrades to coarse-model quality rather than to blank sky.
\end{itemize}

Table~\ref{tab:galsim-arch} summarizes parameter counts and analytic operation counts for the
three networks.

\begin{table}[h]\centering
\caption{Galaxy flux field model sizes and analytic floating-point operation counts, per $1024^2$ field
for the full-resolution baseline and the coarse model, and per certified cell for the refiner.}
\label{tab:galsim-arch}
\resizebox{\textwidth}{!}{
\begin{tabular}{lccc}\toprule
Model & Role & Params & FLOPs \\ \midrule
2D U-Net, base width 64 & Full baseline & 31{,}389{,}764 & $1.78\times10^{12}$ \\
Same U-Net, $8\times$-downsampled input & Coarse & 31{,}389{,}764 (same) & $2.78\times10^{10}$ ($64\times$ fewer) \\
Local refinement operator & Refiner & 17{,}809 & $3.05\times10^{6}$ per cell \\ \bottomrule
\end{tabular}}
\end{table}

\subsubsection{Training and Calibration Details}

All three networks are trained with Adam at a learning rate of $10^{-3}$ for $30$ epochs on the
$2000$ training fields, holding out a fixed $10\%$ of them as an internal validation split, with
all randomness seeded. The full-resolution baseline and the coarse model minimize the quantile pinball loss
summed over $\tau\in\{0.05,0.5,0.9,0.95\}$, against the log-transformed fine target and the
log-transformed sum-pooled coarse target respectively. The refiner minimizes an $L_1$ loss in
$\log(1+\cdot)$ space between its predicted patch and the true full-resolution flux patch. 
The refiner's output is anchored to the coarse model's calibrated total.

The refiner is trained on the \emph{certified} distribution rather than on uniformly sampled cells. We run the deployed, already-calibrated coarse model over the training split, apply the calibrated threshold and halo, and draw the refiner's training crops from the resulting certified regions; this yields $716{,}629$ cells over $2000$ fields, $2.19\%$ of all cells. Halo cells are deliberately included, since they are near-empty by construction and are where boundary truncation errors appear. Each cell's coarse-predicted median flux is cached at the same time.

CRC is calibrated on the $500$ calibration fields. 
The score field is the coarse model's predicted $\tau{=}0.9$ quantile of cell
flux. We use an upper quantile rather than the median so that a cell whose median flux is low but whose upper tail is high is still certified. Cell weights are flux mass, the aggregation operator of Eq.~\ref{eq:weight}, so the calibrated loss of Eq.~\ref{eq:ca-loss} is the fraction of a field's true flux that falls in uncertified cells. The certified set is dilated by a halo of one coarse cell (chosen heuristically)  in an eight-connected neighborhood before the loss is evaluated. Calibration scans an ascending grid of $500$ candidate thresholds and takes the largest whose finite-sample corrected risk $(n\hat{R}+1)/(n+1)$ remains at or below $\alpha$.

At the reported operating point $\alpha{=}0.1$, calibration returns a $\hat\lambda$ with a raw-score cutoff of
$7.69$ in $\log(1+\mathrm{flux})$ units, a flux total of approximately $2175$ counts
per cell. The calibration risk is $0.040$ ($0.042$ after the finite-sample correction), and on the
held-out test split the empirical risk is $0.040$ ($0.042$ corrected), comfortably within
$\alpha$, at a certified fraction of $2.18\%$ of coarse cells. Certifying $2.18\%$ of cells leaves
the pipeline at $61.6\times$ fewer floating-point operations than the full-resolution baseline, against
a ceiling of $64\times$ set by the coarse pass alone; the refinement of certified cell in an
average field costs about $4\%$ of what the coarse pass costs, so the coarse pass, not the
refinement, bounds the achievable speedup.

As a reference for how tight the conformal selection is, we also compute an $\alpha$-matched
oracle, the threshold applied directly to the \emph{true} coarse flux totals that attains exactly
risk $\alpha$ under the same halo. That oracle certifies $1.43\%$ of cells, so the calibrated
coarse model certifies $1.53\times$ the cells an oracle selector would need at the same risk level.

\subsubsection{Results}
We provide an assortment of predictions from our model in Fig.~\ref{fig:galaxy-sim-134}.
\begin{figure}[h]
    \centering
    \includegraphics[width=0.95\linewidth]{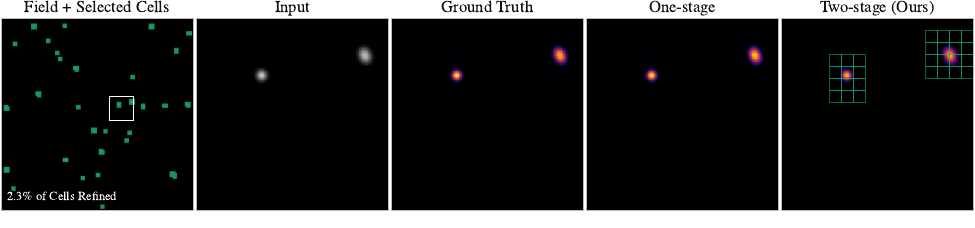}
    \includegraphics[width=0.95\linewidth]{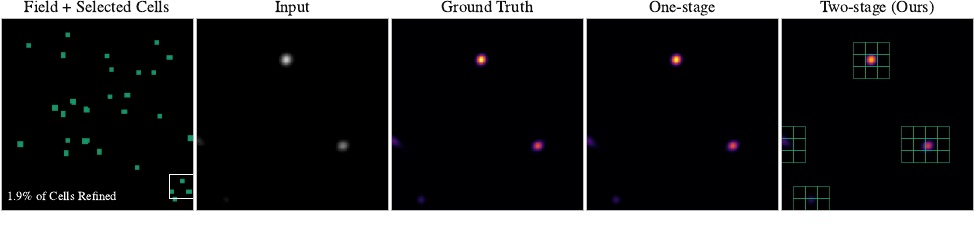}
    \includegraphics[width=0.95\linewidth]{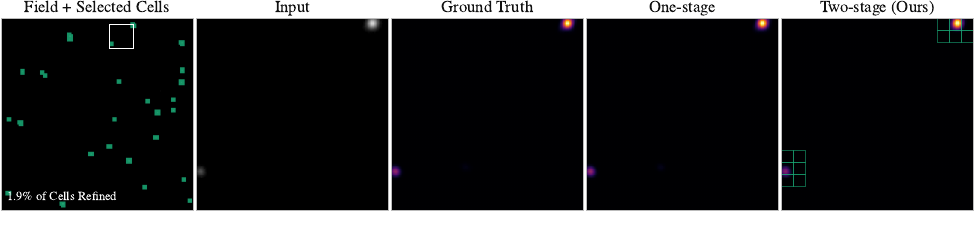}
    \includegraphics[width=0.95\linewidth]{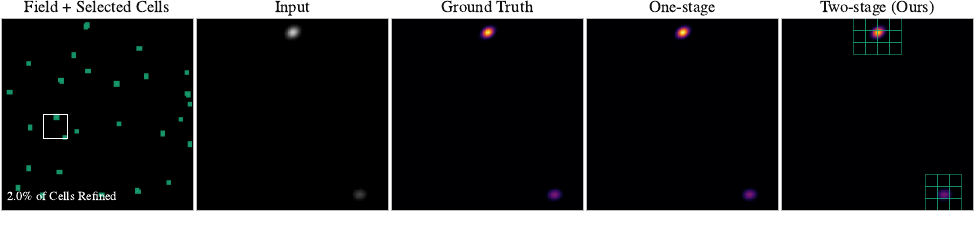}
    \caption{Grid of galaxy flux field predictions.}
    \label{fig:galaxy-sim-134}
\end{figure}

\end{document}

%% file: iclr2027/math_commands.tex
\usepackage{amsmath,amsfonts,bm}

\def\eqref#1{equation~\ref{#1}}

\def\1{\bm{1}}

\DeclareMathAlphabet{\mathsfit}{\encodingdefault}{\sfdefault}{m}{sl}
\SetMathAlphabet{\mathsfit}{bold}{\encodingdefault}{\sfdefault}{bx}{n}

\newcommand{\E}{\mathbb{E}}

